\newcommand{\Cov}{\mathbf{Cov}}
\newcommand{\E}{\mathbf{E}}
\newcommand{\var}{\mathbf{Var}}
\newcommand{\prob}{\mathbf{P}}
\newcommand{\uptau}{{\tau}}
\newcommand{\uptauj}{{\tau, j}}
\newcommand{\upc}{{\textrm{c}}}
\newcommand{\probb}[2][]{\mathbf{P}_{#1}\left[#2\right]}
\newcommand{\subsig}{{\tilde M}}
\newcommand{\lfl}{\left\lfloor}
\newcommand{\rfl}{\right\rfloor}
\newcommand{\lcl}{\left\lceil}
\newcommand{\rcl}{\right\rceil}
\author{%
   Xiaoou Cheng\\
  Courant Institute of Mathematical Sciences\\
  New York University\\
  New York, NY 10012 \\
  \texttt{chengxo@nyu.edu} \\
  \And
  Jonathan Weare \\
  Courant Institute of Mathematical Sciences \\
  New York University \\
  New York, NY 10012\\
  \texttt{weare@nyu.edu} \\
}
\newtheorem{thm}{Theorem}
\newtheorem{lem}{Lemma}
\newtheorem{asmp}{Assumption}
\newtheorem{cor}{Corollary}
\newtheorem{??}{To-Do}
\newtheorem*{ill*}{Illustration}
\title{The surprising efficiency of temporal difference learning for rare event prediction}
\begin{document}

\maketitle

\begin{abstract}
  We quantify the efficiency of temporal difference (TD) learning over the direct, or Monte Carlo (MC), estimator for policy evaluation in reinforcement learning, with an emphasis on estimation of quantities related to rare events. Policy evaluation is complicated in the rare event setting by the long timescale of the event and by the need for \emph{relative accuracy} in estimates of very small values.  Specifically, we focus on least-squares TD (LSTD) prediction for finite state Markov chains, and show that LSTD can achieve relative accuracy far more efficiently than MC.  We prove a central limit theorem for the LSTD estimator and upper bound the 
  relative asymptotic variance
  by simple quantities characterizing the connectivity of states relative to the transition probabilities between them. Using this bound, we show that, even when both the timescale of the rare event and the relative error of the MC estimator are exponentially large in the number of states, LSTD maintains a fixed level of relative accuracy with  a total number of observed transitions of the Markov chain that is only \emph{polynomially} large in the number of states. 
\end{abstract}

\section{Introduction}
Prediction of the future behavior of a dynamical system from time series observations is a fundamental task in science and engineering. One need only remember the weather forecast  consulted this morning to appreciate the substantial intersection this problem has with our everyday lives. Beyond its own obvious importance, prediction is also a key component in modern machine learning tasks like reinforcement learning, where the rules governing the dynamical system are adjusted to optimize some reward.  Exciting applications in that field, like self-driving cars,  have   captured public imagination.

Prediction is made particularly difficult when the future behavior of interest involves some rare or extreme event. Examples include extreme weather or climate events, failure of reliable engineering products, and the conformational rearrangements that determine critical functions of biomolecules in our bodies. Precisely because of their out-sized impact on our lives and society, rare and extreme events are the most important to predict.  Unfortunately, by definition, data sets often contain relatively few examples of these events. In this article we ask:  \emph{can rare events be predicted accurately with time series data sets much shorter than the typical timescale of the event?} The surprising answer is Yes. Our mathematical results support recent studies demonstrating accurate rare event prediction with limited data using methods related to those studied here, including in state-of-the-art application settings and for very rare events \citep{thiede_galerkin_2019,strahan_long-time-scale_2021,finkel2021learning,finkel2023data,Antoszewski2021phenol,Lucente2021coupling,finkel2023revealing,Strahan2023nntraj,Dumas2023committor,Strahan2023inla,Guo2024civsp}. 

Within the context of reinforcement learning, ``prediction'' refers to policy evaluation, that is, calculation of the value function of states given a policy \citep{sutton_reinforcement_2018}. Prediction is the foundation for policy improvement in control problems. 
In this work, we focus on prediction for Markov chains on a finite state space. While many dynamical systems of physical interest evolve in continuous spaces, this so-called ``tabular'' setting is the most common for mathematical analysis. Moreover, some practical algorithms for prediction problems in continuous space begin with a projection onto a   finite state Markov process \citep{thiede_galerkin_2019,finkel2021learning,finkel2023revealing,Dumas2023committor}. For those algorithms our analysis can be viewed as addressing  estimation (but not approximation) error \cite{}.

Specifically, we formulate a prediction problem in terms of a Markov reward process (MRP) $([n], P, R)$, 
where $[n] = \{1,2,\dots,n\}$ is the state space, ${P}$ is the Markov transition kernel with $P(i,j) = \mathbf{P}\left[ X_{t+1}=j\,|\,X_t=i\right]$, and ${R}$ is a deterministic, non-negative
reward function. Time dependence and randomness of $R$ can be included by an enlargement of the state space. 
Our goal is to estimate a value function of the general form\footnote{General MRPs include  a discounting factor $\gamma \in (0,1]$. 
We choose $\gamma = 1$ and assume that $u$ is well-defined under this condition. 
In general, smaller values of $\gamma$ make the prediction problem easier. }
\begin{equation}\label{eq:genu}
u(i) = 
\mathbf{E}_i\left[\sum_{t=0}^{T}R(X_t) \right],
\end{equation}
where $D$ is a subset of $[n]$, $T = \min\{t\geq 0:\, X_t \notin D\}$ is the escape time from $D$, and the subscript $i$ on the expectation indicates that $X_0=i$. By appropriate choice of $R$, we can consider both cases in which the escape time from $D$ is very large and cases in which only low probability escape paths accumulate significant reward.
  These are the typical situations for rare event statistics, where the reward $R$ and the set $D$ are specified by the application. 
  In the example of self-driving cars, when investigating the rare event that the car crashes, $D$ is the set of all states in which the car has not crashed.

\subsection{Monte Carlo and temporal difference learning}
\label{sec:setup}

The prediction problem has been extensively analyzed in the reinforcement learning literature \citep{sutton_reinforcement_2018, dann14atdsurvey}. Our problem's main point of departure from that literature is the central role of the escape event from $D$ in \eqref{eq:genu}. We are specifically interested in cases in which $\mathbf{E}_j[T]$ can be very large so that the length of  a single escape trajectory can be very large, and/or cases in which large values of the reward functions are very unlikely so that $u$ can be very small. 

Our task is to learn an estimate of $u$ from a set of $M$ samples $\{X_0^\ell, X_1^\ell,\dots, X^\ell_{\tau \wedge T^\ell}\}_{\ell=1}^M$ of $X_0,X_1,\dots, X_{\tau\wedge T}$ with $\{X_0^\ell\}_{\ell=1}^M$ drawn from some (possibly unknown) initial probability distribution $\mu$ which is supported in $D$. Here and below we use the shorthand $a \wedge b = \min\{a, b\}$. The deterministic positive integer $\tau$, which we will refer to as a lag time, limits the length of the trajectories in the data set. In practical applications, the sample trajectories are often correlated, for example because multiple length $\tau$ trajectories can be harvested from a single longer trajectory. Here, to simplify the analysis, we  assume that the trajectories are drawn independently. 

The most direct estimator of $u(i)$  is the Monte Carlo (MC) estimator
\[
\hat u(i) = \frac{1}{M_i} \sum_{\ell=1}^{M_i}\sum_{t=0}^{{T^\ell}}R\left(X^\ell_t\right), \quad \text{when }M_i > 0,
\]
where $M_i$ is the number of samples with $X^\ell_0 = i$, i.e.
$
M_i = \sum_{\ell=1}^M \mathbf{1}_{\{i\}}(X_0^\ell).
$
The estimator is undefined for $i\in D$ with $M_i=0$.  The MC estimator does not allow finite values of $\tau$ (unless $T$ is bounded by a constant). When the escape event is rare, $T$ is very large and the MC estimator requires data sets containing very many observed time steps of $X_t$.  Moreover, estimators of rare event statistics are evaluated based on the amount of data required to achieve a desired \emph{relative error} \citep{bucklew2004res}. In the rare event setting, $u(i)$ can be extremely small because of the low probability of the event to happen. 
In this case, assuming $\mu(i)>0$, control of the relative variance, $\mathbf{Var}\left[ \hat u(i)\right]/u^2(i)$,
often requires $M \propto 1/u(i)$ trajectory samples for the MC estimator.  This  scaling is fatal when  $u(i)$ is very small. 

Temporal difference (TD) schemes, on the other hand, express $u$ as the solution to a certain Bellman equation \citep{Sutton1988td}. They enforce temporal local consistency of the estimate of the value function. As such, TD methods can accommodate \emph{any} positive choice of $\tau$, while MC methods require that each trajectory be observed until escape from $D$ (i.e. $\tau = \infty$). TD schemes are used heavily in reinforcement learning and they are often observed to out-perform MC~\citep{sutton_reinforcement_2018}. However, a quantitative understanding of the advantages of 
TD compared to MC remains elusive. In practice, there are several variants of TD schemes: parametric or tabular, online or batch. In this work, we stick to the simplest setting: the least-squares TD (LSTD)  estimator \citep{Bradtke1996lstd} in the tabular setting of a finite state MRP. Our goal is to clearly characterize the benefit of the LSTD estimator over MC for predictions related to rare events in this setting.

TD schemes begin with the observation that, for any choice of $\tau$, $u$ in \eqref{eq:genu} solves the linear system
\begin{equation}\label{eq:FK}
(I-S^\tau)u(i) = \sum_{t=0}^{\tau-1} S^t R_D (i)\text{ for }i\in D \quad \text{and}\quad u(i) = R(i)\text{ for }i\notin D
\end{equation}
where 
\[
S(i,j) = P(i,j)\text{ for }i\in D\quad\text{and}\quad S(i,j) = \delta_{ij}
\text{ for }i\notin D,
\]
is the transition operator of $X_{t\wedge T}$, $S^\tau$ is the $\tau$th power of $S$ and  $R_D(i)=R(i)$ for $i\in D$ while $R_D(i) = 0$ for $i\notin D$.
In the limit $\tau\rightarrow \infty$,  \eqref{eq:FK} becomes \eqref{eq:genu}. In our theory and examples, we will see that the choice of $\tau$ can have a dramatic effect on the performance of the estimator.


TD methods use trajectory data to find an approximate solution to \eqref{eq:FK}. Specifically, the LSTD estimator $\tilde u$ in this tabular setting is the solution to a linear system where the transition matrix $S$ in \eqref{eq:FK} is approximated with trajectory data. We set $\tilde S^0 = I$ and, for $t=1,2,\dots,\tau$, 
\[
\tilde S^t(i,j) =  \frac{1}{M_i} \sum_{\ell=1}^{M} \mathbf{1}_{\{(i,j)\}}\left(X^\ell_{0}, X^\ell_{t \wedge T^\ell}\right) \text{ for } M_i >0
\quad\text{and}\quad
\tilde S^t(i,j)=\delta_{ij} \text{ for }  M_i = 0.
\]
Unlike $S^t$,  $\tilde S^t$ are not powers of a single matrix. When it exists, the $\tau$-step 
LSTD estimator, $\tilde u$ solves  
\begin{equation}\label{eq:FKapprox}
(I-\tilde S^\tau)\tilde u(i) = \sum_{t=0}^{\tau-1} \tilde S^t R_D(i) \text{ for }i\in D \quad \text{and}\quad \tilde u(i) = R(i)\text{ for }i\notin D.
\end{equation}
(Like the MC estimator, $\tilde u(i)$ is undefined when $M_i=0$). 
In the $\tau\rightarrow \infty$ limit, the $\tau$-step LSTD and MC estimators are equivalent. We therefore occasionally  refer to the Monte Carlo estimator as the $\tau=\infty$ estimator.

\subsection{Our contributions}
\label{sec:contribution}
We pause now to consider what classical perturbation theory for linear systems has to say about the feasibility of estimating $u$, the solution of \eqref{eq:FK}, by $\tilde u$,  the solution to \eqref{eq:FKapprox}.  Clearly \eqref{eq:FKapprox} only requires access to trajectories of length $\tau$, but how sensitive is the solution of \eqref{eq:FK} to perturbations in $S$? If it is very sensitive then any reduction in data requirements from shorter trajectories may be offset by a need for many more trajectories.  
Classical perturbation theory tells us that errors  $S^t - \tilde S^t$ can, in the worst case, be amplified in $u - \tilde u$ by a factor of the condition number $\kappa^\tau =\lVert ( I - S_D^\tau)^{-1} \rVert \lVert I - S_D^\tau \rVert $, 
where $S_D^\tau$ is the matrix obtained from $S^\tau$ by setting to zero any row or column with index in $D^\textrm{c}$ \citep{demmel1997nla}. 
As the next lemma shows, when typical values of $T$ are much larger than $\tau$, $\lVert (I-S_D^\tau)^{-1}\rVert$ will be very large. The proof of Lemma~\ref{lem:cond} is in Appendix \ref{appd:cond}.
\begin{lem}\label{lem:cond}
    For any consistent matrix norm, if the restriction of $S_D$ to row and column indices in $D$ is irreducible and aperiodic, then 
    $
    \lVert (I - S_D^\tau)^{-1} \rVert \geq   \mathbf{E}_\nu[T]/\tau 
    $
    where $\nu(i) = \lim_{t\rightarrow \infty} \mathbf{P}\left[ X_t = i\, |\, T>t\right]$ is the quasi-stationary distribution and the subscript on the expectation indicates that $X_0$ is drawn from $\nu$. (See \cite{collett_quasi-stationary_2012} for more about the quasi-stationary distribution).
\end{lem}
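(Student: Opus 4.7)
The plan is to relate $\lVert (I-S_D^\tau)^{-1}\rVert$ to the Perron eigenvalue $\lambda$ of the restriction of $S_D$ to $D$ (call it $S_{DD}$), and then relate $\lambda$ to $\mathbf{E}_\nu[T]$. Because $S_{DD}$ is irreducible, aperiodic, and substochastic with at least one row summing to less than one (otherwise no escape is possible), Perron--Frobenius gives a unique simple eigenvalue $\lambda \in (0,1)$ of maximum modulus, with a strictly positive left eigenvector which, when normalized to a probability distribution on $D$, is exactly the quasi-stationary distribution $\nu$. Thus $\nu S_D^\tau = \lambda^\tau \nu$ and $\nu(I - S_D^\tau)^{-1} = (1-\lambda^\tau)^{-1}\nu$, so $(1-\lambda^\tau)^{-1}$ is an eigenvalue of $(I-S_D^\tau)^{-1}$.

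Next I would identify $\mathbf{E}_\nu[T]$ in terms of $\lambda$. By the tail-sum formula and the fact that $\mathbf{P}_\nu[T>t] = \nu S_{DD}^t \mathbf{1} = \lambda^t$, one gets
\begin{equation*}
\mathbf{E}_\nu[T] \;=\; \sum_{t=0}^\infty \mathbf{P}_\nu[T>t] \;=\; \sum_{t=0}^\infty \lambda^t \;=\; \frac{1}{1-\lambda}.
\end{equation*}

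Now combine the two pieces. For any consistent matrix norm, $\lVert A \rVert \geq \rho(A)$, so
\begin{equation*}
\lVert (I - S_D^\tau)^{-1} \rVert \;\geq\; \rho\bigl((I - S_D^\tau)^{-1}\bigr) \;\geq\; \frac{1}{1-\lambda^\tau},
\end{equation*}
where I use that among the eigenvalues $1/(1-\mu_i^\tau)$ of $(I-S_D^\tau)^{-1}$ on $D$, the one associated to $\mu_i=\lambda$ is the largest in modulus, since any other eigenvalue $\mu_i$ of $S_{DD}$ satisfies $|\mu_i|<\lambda$ and hence $|1-\mu_i^\tau|\geq 1-|\mu_i^\tau|>1-\lambda^\tau$; on the $D^\upc$ block $(I-S_D^\tau)^{-1}$ is just $I$, which has spectral radius $1\leq 1/(1-\lambda^\tau)$.

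Finally, the elementary identity $1-\lambda^\tau = (1-\lambda)\sum_{k=0}^{\tau-1}\lambda^k$ together with $0<\lambda<1$ gives $1-\lambda^\tau \leq \tau(1-\lambda)$, hence $1/(1-\lambda^\tau) \geq 1/\bigl(\tau(1-\lambda)\bigr) = \mathbf{E}_\nu[T]/\tau$, which is the claimed bound. The only nontrivial step is the Perron--Frobenius argument identifying $\nu$ with the left Perron eigenvector and controlling the other eigenvalues of $S_{DD}$; everything else is bookkeeping.
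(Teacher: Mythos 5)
Your argument is correct and follows essentially the same route as the paper's: identify $\nu$ as the normalized left Perron eigenvector of $S_{DD}$ with eigenvalue $\lambda$, observe that $(1-\lambda^\tau)^{-1}$ is an eigenvalue of $(I-S_D^\tau)^{-1}$ and hence a lower bound for any consistent norm, and then use $\mathbf{E}_\nu[T]=1/(1-\lambda)$ together with $1-\lambda^\tau\leq\tau(1-\lambda)$ (the paper phrases this last step equivalently as $\mathbf{E}_\nu[T]=(1-\lambda^\tau)^{-1}\mathbf{E}_\nu[T\wedge\tau]$ with $\mathbf{E}_\nu[T\wedge\tau]\leq\tau$). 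The only superfluous piece is your verification that $(1-\lambda^\tau)^{-1}$ is the \emph{largest} eigenvalue in modulus; $\lVert A\rVert\geq\rho(A)\geq|\mu|$ for any single eigenvalue $\mu$ already suffices.
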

While it is possible that a small value of $\lVert I-S_D^\tau\rVert$ can compensate for a large value of $\mathbf{E}_\nu[T]/\tau$ and  result in a moderate condition number, for many rare event problems this is not the case. For example, for the system studied in our numerical experiments in Section \ref{sec:experiments}, $\lVert I - S_D\rVert$ is bounded away from zero even as the escape event becomes increasingly rare (see Appendix \ref{appd: example}).

Thus it would seem that we are forced to choose between two doomed options: choose $\tau$ small to control the length of trajectories we need to observe but observe a huge number 
($M \propto \mathbf{E}_\nu[T]/\tau $)
of them to drive down the error in $\tilde S^t$, or observe very long trajectories ($\tau \propto \mathbf{E}_\nu[T]$) to control $\kappa^\tau$.  With either of these choices we would not expect TD to significantly outperform MC.

A primary goal of this article is to explain that, in many (perhaps most) cases, this is a false choice. The worst case analysis that characterizes the classical perturbation theory is, by design, pessimistic. But as we will explain, in our setting it is wildly pessimistic. With a few simple and practically relevant assumptions we will be able to show that $\tau$-step LSTD can achieve remarkably accurate estimates with remarkably little data. 

Throughout this paper we quantify the relative accuracy of $\tilde u(i)$ by the \emph{relative asymptotic variance} ${\sigma_i^2}/{u^2(i)}$ where $\sigma_i^2$ is the variance of the limiting normal distribution in a central limit theorem for $\tilde u(i)$ established in Theorem \ref{thm:clt}. Also in Theorem \ref{thm:clt}, we provide an upper bound on the relative asymptotic variance by simple quantities characterizing the connectivity of states relative to the transition probabilities between them. Crucially, neither the condition number $\kappa^\tau$, nor any expectation of $T$, appear explicitly in the bound. Next, we turn our focus to the rare event setting, which we characterize via the large $n$ behavior of the estimator. In this setting both the typical escape time and the relative variance of the MC estimator can scale exponentially with $n$. As we show in Theorem \ref{thm:avar2}, however, the relative asymptotic variance of the LSTD estimator scales, at worst like $n^3$. 

\subsection{Related work}
Since it first appeared in \citep{Sutton1988td}, there have been numerous variants of TD \citep{Bradtke1996lstd,dann14atdsurvey,lillicrap2019neuraltd}. 
Early theoretical analysis of TD focused on asymptotic convergence with linear value function approximation \citep{Jaakkola1993conv,Dayan1994tdconv,Tsitsiklis1997tdfunc}, along with examples of divergence \citep{Baird1995convdiv,Tsitsiklis1997tdfunc}. More recently, nonasymptotic convergence analysis of TD has been performed \citep{russo2018finitetd,Dalal2018finitetd,srikant2019finitetd,Cai2019neuraltd}. 

Besides its own convergence, a long-considered question for TD is when and how it outperforms MC. While practical experience suggests that TD is more efficient than MC, a comprehensive theory is lacking. \citet{Grune2007lstdmc} and \citet{Grune2009unbiased} proved that LSTD is at least as statistically efficient as MC, but the improvement is not quantified. \citet{Cheikhi2023td} expresses the relative benefit of TD over MC by the inverse trajectory pooling coefficient, which facilitates interpretation of the ratio of asymptotic variances. But how that translates to quantitative improvements requires  further elucidation. As far as we are aware, prediction for rare event problems has not been analyzed in the reinforcement learning literature.

TD enforces temporal consistency for trajectory data. Trajectory data has been utilized in applications more broadly. Markov state models~\citep{Husic2018msm} and dynamic mode decomposition methods~\citep{Schmid2022dmdrev} were developed in the molecular dynamics and fluid dynamics communities respectively and have been successful at estimating eigenvectors and eigenvalues of Markov transition (or Koopman) operators from trajectory data in state-of-the-art applications. The dynamical Galerkin approximation~\citep{thiede_galerkin_2019} method extends those approaches to the prediction problem studied here and has been used to study rare events in molecular dynamics and climate science~\citep{strahan_long-time-scale_2021,Antoszewski2021phenol,finkel2021learning,finkel2023revealing,Guo2024civsp}.  Despite widespread application in the study of various rare events over several decades, the present article is the first theoretical evidence that trajectory analysis methods can be effective tools specifically for rare events. 

\subsection{Notation}
 $ X \xrightarrow{\mathcal{D}} \mathcal{N}(\mu_0, \sigma_0^2)$ indicates that  a random variable $X$ converges in distribution to a normal distribution with mean $\mu_0$ and variance $\sigma_0^2$. For any subset $A \subseteq [n]$, we define the hitting time $T_A = \min\{t > 0: X_t \in A\}$. When $X_t$ never hits $A$, $T_A = +\infty$. 
 When the set $A$ contains only one state $i$, we use $T_i$  for simplicity. Note that in \eqref{eq:genu}, $T$ counts from $t=0$ instead. While $S$ is the transition probability matrix for the Markov chain $X_{t\wedge T}$, we define another Markov chain $Y_t^\uptau$ whose transition probability matrix is $S^\uptau$. For $Y_t^\uptau$, we define its hitting time to be $T_A^\uptau = \min\{t>0: Y_t^\uptau \in A\}$. We define $T^\tau = \min\{t\geq 0: Y_t^\tau \in D\}$ as a counterpart of $T$ for $X_t$. We use $e_i$ for the $i$th standard basis vector, i.e. $e_i(j) = \delta_{ij}$. For any two quantities $a$ and $b$, $ a \gtrsim b$ means that there exists a positive constant $C$ independent of $n$ such that $a \geq Cb$. Similarly, $a \lesssim b$ means that $a \leq Cb$.

 \section{An upper bound for relative asymptotic variance}
 \label{sec:upperbound}
As previous authors have done \citep{Cheikhi2023td}, we will characterize the error of the  LSTD estimator using a central limit theorem.
 Unlike previous work, we will also provide a simple upper bound on the relative asymptotic variance that clearly distinguishes the LSTD estimator from the MC estimator.  Our subsequent analysis of the LSTD estimator in the rare event setting will be derived from this bound.
\begin{thm}\label{thm:clt}
Suppose $\mu(i)>0$ for all $i\in D$ and $P$ is irreducible. Then, as the number of samples $M\to \infty$,
\begin{equation}
\sqrt{M} \left(\tilde u(i) - u(i)\right) \xrightarrow{\mathcal{D}} \mathcal{N}(0, \sigma_i^2),
\end{equation}
with
\begin{equation}\label{eq:sigmaimain}
    \sigma_i^2 = \sum_{k \in D}\frac{1}{\mu(k)}\left( e_i^{\mathsf{T}}(I - S_D^\tau)^{-1} e_k\right)^2 \mathbf{E}_k\left[\sum_{t=0}^{\tau-1}R_D(X_{t \wedge T}) + u(X_{\tau \wedge T}) - u(k) \right]^2.
\end{equation}
Moreover, the relative asymptotic variance $\sigma_i^2/u^2(i)$ satisfies the upper bound
   \begin{equation}\label{eq:avar1}
        \frac{\sigma_i^2}{u^2(i)}
        \leq
        \sum_{\substack{k\in D,\, \ell\in [n],\\ \ell \neq k}} 
        \frac{\sum_{t=1}^\tau S^t(k,\ell)}{\mu(k)\, Q^\tau(k,\ell)^2}
\end{equation}
where we have introduced the key quantity
$
Q^\tau(k,\ell) = \mathbf{P}_k\left[T_\ell^\uptau < T_k^\uptau \wedge T^\uptau_{D^{\textrm{c}}\setminus\{\ell\}}\right].
$
\end{thm}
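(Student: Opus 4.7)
The argument splits into the CLT and the relative-variance bound; the first is largely mechanical while the second carries the real content.

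For the CLT, the plan is to linearize the LSTD error in the empirical transition counts. Subtracting \eqref{eq:FK} from \eqref{eq:FKapprox} and restricting to rows and columns in $D$ yields
\begin{equation*}
(I - \tilde S^\tau|_D)(\tilde u - u)|_D = \epsilon, \qquad \epsilon(k) = \frac{1}{M_k}\sum_{\ell\,:\,X_0^\ell = k} Z_k^\ell,
\end{equation*}
with $Z_k^\ell = \sum_{t=0}^{\tau-1} R_D(X_{t\wedge T^\ell}^\ell) + u(X_{\tau\wedge T^\ell}^\ell) - u(k)$, a mean-zero random variable (conditional on $X_0^\ell = k$) by the Bellman identity built into \eqref{eq:FK}. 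Since $M_k/M \to \mu(k) > 0$ almost surely and trajectories launched from distinct initial states are independent, the classical CLT applied conditionally per starting state gives $\sqrt{M}\,\epsilon \xrightarrow{\mathcal{D}} \mathcal{N}(0, \Sigma)$ with $\Sigma$ diagonal and $\Sigma_{kk} = \mathbf{E}_k[Z_k^2]/\mu(k)$. Continuity of matrix inversion and Slutsky's theorem then produce \eqref{eq:sigmaimain}.

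For the relative-variance bound, I would exploit non-negativity. With $G^\tau := (I - S_D^\tau)^{-1}$ and $\tilde b(k) := \mathbf{E}_k\bigl[\sum_{t=0}^{\tau-1} R_D(X_{t\wedge T}) + R(X_T)\mathbf{1}_{\{T\leq\tau\}}\bigr] \geq 0$, the Bellman equation gives the termwise-non-negative representation $u(i) = \sum_{k \in D} G^\tau(i,k)\,\tilde b(k)$, so $G^\tau(i,k)\tilde b(k) \leq u(i)$ for each $k$. Substituting into \eqref{eq:sigmaimain} reduces the claim to the per-state inequality
\begin{equation*}
\frac{\mathbf{E}_k[Z_k^2]}{\tilde b(k)^2} \leq \sum_{\ell \neq k} \frac{\sum_{t=1}^\tau S^t(k,\ell)}{Q^\tau(k,\ell)^2}, \qquad k \in D.
\end{equation*}
In this inequality $Q^\tau(k,\ell)$ is a hitting-probability weight of the $\tau$-step chain $Y^\tau$ (the probability of visiting $\ell$ before returning to $k$ or being absorbed at another boundary state), while $\sum_{t=1}^\tau S^t(k,\ell)$ is the expected number of $X$-visits to $\ell$ inside a single $\tau$-window. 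My plan is to combine the martingale-difference expansion $Z_k = \sum_{s=0}^{\tau-1}\bigl[R_D(X_{s\wedge T}) + u(X_{(s+1)\wedge T}) - u(X_{s\wedge T})\bigr]$, whose summands are $L^2$-orthogonal under $\mathbf{P}_k$, with a weighted Cauchy--Schwarz over the states $\ell \neq k$ through which excursions of $Y^\tau$ from $k$ may pass.

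The main obstacle is precisely this per-state inequality, since it couples two different timescales: $\mathbf{E}_k[Z_k^2]$ lives on a single $\tau$-window of $X$, whereas $Q^\tau(k,\ell)$ is a long-time statistic of $Y^\tau$. The delicate step is to choose the Cauchy--Schwarz weights so that $\tilde b(k)^2$ emerges as the natural normalization and $Q^\tau(k,\ell)^2$ appears in the denominator; degenerate states with $\tilde b(k) = 0$ would be handled by a limiting argument. All other elements (existence of $(I - \tilde S^\tau|_D)^{-1}$ for large $M$, independence across initial states, and non-negativity of $G^\tau$ and $\tilde b$) are routine.
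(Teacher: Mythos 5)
Your CLT argument is fine and in fact takes a slightly cleaner route than the paper: you linearize the error equation directly, writing $(I-\tilde S^\tau|_D)(\tilde u - u)|_D = \epsilon$ with $\epsilon(k)$ an average of the conditionally mean-zero temporal-difference residuals $Z_k^\ell$, then apply the CLT per starting state and Slutsky. The paper instead proves a joint CLT for all entries of $\tilde S^1,\dots,\tilde S^\tau$ and pushes it through the delta method using explicit derivatives $\partial u/\partial S^t(k,\ell)$; after simplification via the Bellman equation it lands on the same expression \eqref{eq:sigmaimain}. Your diagonal covariance $\Sigma_{kk}=\mathbf{E}_k[Z_k^2]/\mu(k)$ and the independence across starting states are correct.

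The variance bound, however, has a genuine gap, and it sits exactly where you placed the "main obstacle." Your plan is to write $u(i)=\sum_{k\in D}G^\tau(i,k)\tilde b(k)$ with $\tilde b\ge 0$, use the termwise bound $G^\tau(i,k)\tilde b(k)\le u(i)$, and reduce to the per-state inequality $\mathbf{E}_k[Z_k^2]/\tilde b(k)^2 \le \sum_{\ell\neq k}\sum_{t=1}^\tau S^t(k,\ell)/Q^\tau(k,\ell)^2$. This inequality is false, and not merely in degenerate edge cases. For the committor ($R\equiv 0$ on $D$) with $\tau=1$, one has $\tilde b(k)=\sum_{\ell\in D^{\textrm{c}}}S(k,\ell)R(\ell)=0$ for every $k$ not adjacent to the boundary, while $\mathbf{E}_k[Z_k^2]=\mathbf{Var}_k[u(X_{1\wedge T})]>0$ for such $k$. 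The left side is infinite and the right side is finite; no limiting argument repairs this, because the representation of $u(i)$ concentrates the entire source on the few states near $\partial D$, so the termwise bound $G^\tau(i,k)\tilde b(k)\le u(i)$ is vacuous at precisely the states that contribute to $\sigma_i^2$. The correct normalization is not per-state but per-pair: the paper's key step (Lemma~\ref{lem:generallem}) bounds the product $\left(e_i^{\mathsf{T}}(I-S_D^\tau)^{-1}e_k\right)\lvert u(\ell)-u(k)\rvert \le u(i)/Q^\tau(k,\ell)$, i.e.\ it controls the Green's function times the \emph{oscillation} of $u$ between $k$ and $\ell$, by decomposing $u(\ell)-u(k)$ into differences of occupation measures and comparing hitting probabilities through a lumped four-state chain. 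Your martingale-difference expansion of $Z_k$ (giving $\mathbf{E}_k[Z_k^2]\le\sum_{\ell\neq k}(u(\ell)-u(k))^2\sum_{t=1}^\tau S^t(k,\ell)$) matches the paper's law-of-total-variance step and is the right first move, but the bound on $(u(\ell)-u(k))^2$ weighted by the squared Green's function is the actual content of the theorem, and your proposal does not supply it.
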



An important special case that will include one of the examples studied in Section \ref{sec:experiments}, occurs when the reward function $R$ is zero in $D$ (but non-zero in $D^\textrm{c}$).  In this case we can strengthen the bound in Theorem \ref{thm:clt} by only including $S^\tau$ in the numerator.
\begin{cor}\label{cor:avar1} 
Under the same assumptions as in Theorem \ref{thm:clt} but with $R(i)= 0$ for any $i \in D$, 
    \begin{equation}\label{eq:avar1cor}
        \frac{\sigma_i^2}{u^2(i)}
        \leq
        \sum_{\substack{k\in D,\, \ell\in [n],\\ \ell \neq k}} 
        \frac{ S^\tau(k,\ell)}{\mu(k)\, Q^\tau(k,\ell)^2}\ .
\end{equation}
\vspace{-1.5em}
\end{cor}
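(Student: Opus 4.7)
The plan is to start from the exact expression \eqref{eq:sigmaimain} for $\sigma_i^2$ and observe that the hypothesis $R \equiv 0$ on $D$ causes the running reward $R_D(X_{t\wedge T})$ to vanish for all $t \leq \tau - 1$ (since by construction $X_{t \wedge T} \in D$ whenever $t < T$, and $R_D = 0$ on $D^{\textrm{c}}$ by definition). Thus the inner bracket in \eqref{eq:sigmaimain} collapses to $u(X_{\tau\wedge T}) - u(k)$, and
\begin{equation*}
\sigma_i^2 \;=\; \sum_{k\in D}\frac{1}{\mu(k)}\bigl(e_i^{\mathsf T}(I-S_D^\tau)^{-1}e_k\bigr)^2\, \mathbf{E}_k\bigl[u(X_{\tau\wedge T}) - u(k)\bigr]^2.
\end{equation*}

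The next step exploits the Bellman identity \eqref{eq:FK}. When $R_D \equiv 0$ we have $(I - S^\tau)u(k) = 0$ for $k \in D$, i.e. $u(k) = \mathbf{E}_k[u(X_{\tau\wedge T})]$. Therefore the inner squared expectation is exactly the variance $\mathbf{Var}_k[u(X_{\tau\wedge T})]$, which I would bound by the second-moment minus the mean-squared identity:
\begin{equation*}
\mathbf{Var}_k\bigl[u(X_{\tau\wedge T})\bigr] \;=\; \sum_{\ell\in[n]} S^\tau(k,\ell)\, u(\ell)^2 \;-\; u(k)^2 \;\leq\; \sum_{\ell\neq k} S^\tau(k,\ell)\, u(\ell)^2,
\end{equation*}
where the last inequality drops the non-negative contribution $(1-S^\tau(k,k))u(k)^2$. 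This is the decisive step that replaces the $\sum_{t=1}^\tau S^t(k,\ell)$ aggregate of Theorem~\ref{thm:clt} by the single term $S^\tau(k,\ell)$: the running-reward terms that were responsible for the intermediate powers of $S$ in the general bound have vanished.

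Finally, I would reuse the central inequality already established in the proof of Theorem~\ref{thm:clt}, namely that
\begin{equation*}
\bigl(e_i^{\mathsf T}(I-S_D^\tau)^{-1}e_k\bigr)^2 u(\ell)^2 \;\leq\; \frac{u(i)^2}{Q^\tau(k,\ell)^2},
\end{equation*}
which expresses the expected-visits / hitting-probability comparison between $i$ and $k$ relative to the excursion probability $Q^\tau(k,\ell)$ through $\ell$. Substituting this bound into the expression for $\sigma_i^2$ above and dividing by $u(i)^2$ yields exactly \eqref{eq:avar1cor}.

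The main obstacle is really organizational rather than technical: one must verify that the reduction $\mathbf{Var}_k[u(X_{\tau\wedge T})] \leq \sum_{\ell\neq k} S^\tau(k,\ell) u(\ell)^2$ plugs cleanly into whatever intermediate inequality was used in the proof of Theorem~\ref{thm:clt} to extract the factor $u(i)^2/Q^\tau(k,\ell)^2$. If that inequality was obtained by first bounding $\mathbf{E}_k[\cdot]^2$ by a positive linear combination of terms of the form $u(\ell)^2$ with coefficients $\sum_{t=1}^\tau S^t(k,\ell)$, then the present proof is literally the same argument with the coefficient $\sum_{t=1}^\tau S^t(k,\ell)$ replaced by $S^\tau(k,\ell)$ throughout, which is why the statement of Corollary~\ref{cor:avar1} is a direct strengthening rather than a new argument.
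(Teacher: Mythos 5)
Your first two steps coincide with the paper's: with $R\equiv 0$ on $D$ the running-reward sum vanishes, the Bellman equation gives $u(k)=\mathbf{E}_k[u(X_{\tau\wedge T})]$, and the inner expectation in \eqref{eq:sigmaimain} becomes $\mathbf{Var}_k[u(X_{\tau\wedge T})]$. The gap is in how you then expand this variance. The paper uses the exact identity $\mathbf{E}_k[(u(X_{\tau\wedge T})-u(k))^2]=\sum_{\ell\neq k}S^\tau(k,\ell)\,(u(\ell)-u(k))^2$ and feeds the \emph{differences} $u(\ell)-u(k)$ into Lemma~\ref{lem:generallem}. You instead use $\mathbf{Var}=\mathbf{E}[Z^2]-(\mathbf{E}Z)^2$ and discard $(1-S^\tau(k,k))u(k)^2$ to get $\sum_{\ell\neq k}S^\tau(k,\ell)\,u(\ell)^2$, after which you need the inequality $\bigl(e_i^{\mathsf T}(I-S_D^\tau)^{-1}e_k\bigr)^2 u(\ell)^2\leq u(i)^2/Q^\tau(k,\ell)^2$. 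That is not the inequality established in the proof of Theorem~\ref{thm:clt} — Lemma~\ref{lem:generallem} controls $\bigl(e_i^{\mathsf T}(I-S_D^\tau)^{-1}e_k\bigr)\lvert u(\ell)-u(k)\rvert$, not $\bigl(e_i^{\mathsf T}(I-S_D^\tau)^{-1}e_k\bigr)u(\ell)$ — and it is false in general. For the committor on the chain of Section~\ref{sec:experiments} with $\tau=1$, take $i=k=(n+1)/2$ (the middle mode) and $\ell=k+1$: then $e_k^{\mathsf T}(I-S_D)^{-1}e_k=1/\mathbf{P}_k\left[T<T_k\right]$ is exponentially large in $n$, while $u(\ell)$, $u(i)$ and $Q^1(k,\ell)\geq P(k,\ell)$ are all of order one, so your inequality fails badly. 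The cancellation between $u(\ell)$ and $u(k)$ is essential: it is precisely the smallness of $u(\ell)-u(k)$ for well-connected pairs that offsets the potentially enormous Green's-function factor.

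The fix is minimal and recovers the paper's proof. Keep the variance in centered form, $\sum_{\ell\neq k}S^\tau(k,\ell)(u(\ell)-u(k))^2$ (the $\ell=k$ term vanishes automatically, so nothing needs to be dropped), and apply $\bigl(e_i^{\mathsf T}(I-S_D^\tau)^{-1}e_k\bigr)\lvert u(\ell)-u(k)\rvert\leq u(i)/Q^\tau(k,\ell)$ exactly as in the proof of \eqref{eq:avar1}. With that substitution your argument is the paper's verbatim: the only change relative to Theorem~\ref{thm:clt} is that the coefficient $\sum_{t=1}^\tau S^t(k,\ell)$ appearing in \eqref{eq:sumSappend} is replaced by the single term $S^\tau(k,\ell)$, because the law-of-total-variance decomposition over intermediate times is unnecessary when the running reward is zero.
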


Let us take a moment to parse the bounds in Theorem \ref{thm:clt} and Corollary \ref{cor:avar1}. The bounds tell us that if the sum is bounded, our estimator achieves entry-wise \emph{relative} accuracy, indicating that even when $u(i)$ is extremely small, the error in $\tilde u(i)$ can be much smaller (depending only on $M$). But the denominator is the real star of the show. It is the sum of the probabilities of all paths connecting state $k$ and state $\ell$ that do not return to $k$  and do not exit $D$ (except possibly through $\ell$).  In particular $
Q^\tau(k,\ell) \geq S^\tau(k,\ell)$.
While the probability $Q^\tau(k,\ell)$ can be very small for most pairs $(k, \ell)$, evidently it is the size of $Q^\tau(k,\ell)^2$ relative to $S^t(k,\ell)$ for $t\leq \tau$ (or only $S^\tau(k, \ell)$) that matters.  Fortunately, in many cases, we can expect $Q^\tau(k,\ell)^2$ to be \emph{much} larger than $S^\tau(k,\ell)$ for a good choice of $\tau$. We will quantify this statement in the rare event setting in Section \ref{sec:rare}.


A quantity very similar to $Q^\tau(k,\ell)$ appears in the matrix analysis literature \citep{thiede2015sharp} as a measure of the sensitivity of the relative accuracy of  the invariant distribution of a Markov chain to entry-wise perturbations in its transition probability matrix. The proofs of Theorem \ref{thm:clt} and Corollary \ref{cor:avar1} are given in Appendix \ref{sec:allproofs}.  Lemma \ref{lem:generallem} in Appendix \ref{sec:allproofs}, which is used to bound the variation in $u$ between states, is key to the argument.

\section{Experiments}
\label{sec:experiments}
We now consider two example problems that illustrate the challenges posed by rare event prediction.  For both problems the statistical efficiency of the MC estimator degrades exponentially in the state space size $n$.  Our goal is to see whether or not LSTD's performance degrades similarly as $n$ increases, and, in so doing, to motivate our final bounds on the relative asymptotic variance of LSTD established later in Section \ref{sec:rare}. The examples and numerical results presented in this section are exactly consistent with the assumptions and results presented in Section \ref{sec:rare}.

Both problems are based on a  one dimensional nearest neighbor chain $X_t\in [n]$ with transition rules
\begin{equation}
\label{eq:transition}
    P(i, i\pm1) = \frac{p(i\pm1)}{2(p(i) + p(i\pm1))} \text{ and }  P(i,i)  = 1 - P(i, i+1) - P(i, i-1) \text{ for }i\pm 1 \in [n]
\end{equation}
with  $P(1,0) = P(n,n+1) = 0$ on the boundary.
Here $p$ is the invariant probability vector of $P$,
\begin{equation}\label{eq:invariantdist}
p(i) \propto \exp\left[\frac{n-1}{4\pi} \cos\left(\frac{4\pi (i-1)}{n-1}\right)\right],
\end{equation}
which has modes around $i=1$, $i=(n+1)/2$, and $i=n$, that become more sharply peaked as $n$ increases. The escape time from any of these modes scales exponentially with $n$.
We will be interested in predictions related to the escape of $X_t$ from $D = [n]\setminus\{1,n\}$, and choose $\mu$ to be the uniform distribution on $D$.

Before moving on to the specific problem statements, we point out that despite the exponentially long waiting time to transition between the modes of $p$, transitions between nearest neighbor states remain stable as $n$ increases. In fact, for $i\in D$, 
$
P(i,i\pm 1)
\geq \frac{1}{2(1+ e)}.
$
Stable local transition probabilities such as these play a key role in our upper bounds in Section \ref{sec:rare}, where they are used to lower bound $Q(k,\ell)$ in \eqref{eq:avar1}.



\noindent \textbf{The mean first passage time.}
The mean first passage time $u(i)=\mathbf{E}_i[T]$ solves \eqref{eq:FK} with $R(i)=1$ for $i \in D$ and $R(i)=0$ for $i\notin D$. We plot the mean first passage time  for $n=20,\ 40,$ and $80$ in the left panel of Fig. \ref{fig:T}. Evidently  the largest values of $\mathbf{E}_i[T]$ scale exponentially with $n$. In Appendix \ref{appd: example}, we prove that, near $i=(n+1)/2$,
\begin{equation}\label{eq: maxETscaling}
    \mathbf{E}_i[T] \gtrsim \exp\left(\frac{3n}{8\pi}\right).
\end{equation}
Therefore the  trajectories required by the MC estimator will include  a number of transitions of $X_t$ that scales exponentially with $n$. We compare the performance of the MC estimator of the mean first passage time to that of the LSTD estimator, $\tilde u$, found by solving \eqref{eq:FKapprox} with $\tau =1$. In the middle panel of Fig. \ref{fig:T} we plot the relative asymptotic variance, $\sigma_i^2/{u^2(i)}$, for the same values of $n$.  In the same panel we plot empirical estimates of the the (non-asymptotic) relative mean squared error (MSE), $M\mathbf{MSE}\left[ \tilde u(i) \right]/ u^2(i)$, with $M=10 n^3$. The empirical relative MSE estimates are computed by generating 30000 independent copies of the LSTD estimator. Remarkably, the relative MSE of the LSTD estimator
grows \emph{more slowly} than $n^3$. Meanwhile, near the boundary of $D$, the relative MSE of the MC estimator (computed exactly) grows exponentially fast with $n$, as can be seen in the right panel of Fig. \ref{fig:T}.
\begin{figure}[h]
\centering
    \includegraphics[width=\textwidth]{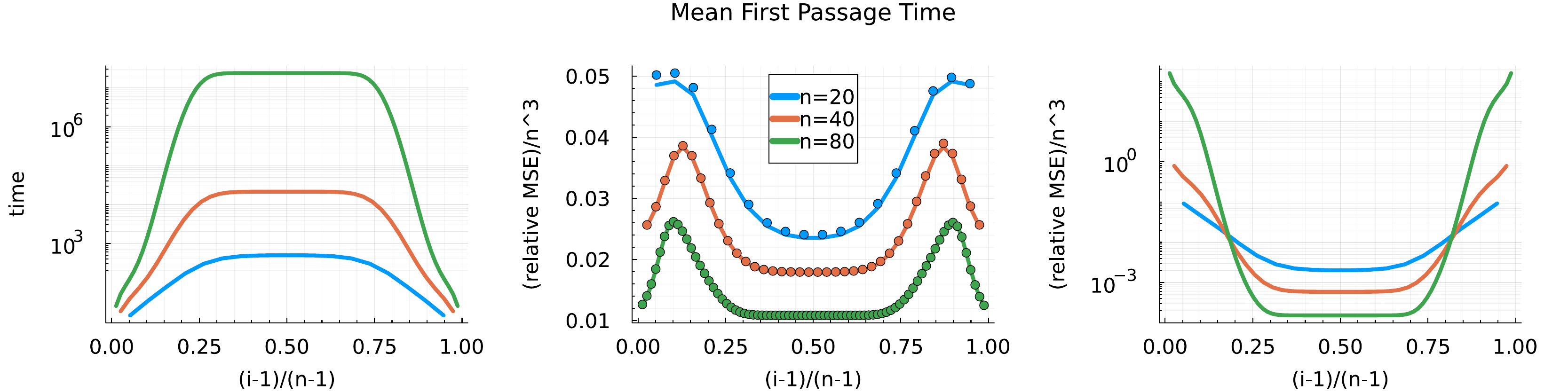}
    
    \caption{Left: the (exact) mean first passage time $u(i)$ with $n=20,\ 40,$ and $80$. Middle: the relative asymptotic variance (solid lines) and the relative empirical MSE (circles) of the LSTD estimator with $\tau=1$. The relative empirical MSE are obtained with sample sizes $M = 10 n^3$. Right: the (exact) relative asymptotic MSE of the MC estimator.}
    \label{fig:T}
\end{figure}

\textbf{The committor.}
We now consider estimation of the committor function 
\[
u(i) = \mathbf{P}_i\left[T_n < T_1 \right]\text{ for } i\in [n]\setminus \{1,n\}\quad\text{and}\quad
u(i) = \mathbf{1}_{\{n\}}(i)\text{ for }i \in \{1, n\}.
\] 
 The committor corresponds to the choice $R(i)=0$ for $i\neq n$ and $R(n)=1$.
Because $\mathbf{E}_i[T]$ can be exponentially large,  the MC estimator of the committor again requires a data set containing exponentially long trajectories of $X_t$.  Moreover, in Appendix \ref{appd: example}, we prove that $u(i)$ can be exponentially small in $n$, as 
\begin{equation}\label{eq: committor2scaling}
    u(2) \lesssim \exp\left(-\frac{n}{4\pi}\right),
\end{equation}
 as is evident in the plot of the committor for $n=20,\ 40,$ and $80$, in the left panel of Fig. \ref{fig:2}.  As a consequence, the relative MSE of the MC estimator, plotted in the right panel of Fig. \ref{fig:2}, grows exponentially fast with $n$.
 As can be seen from the middle panel of Fig.~\ref{fig:2}, however, the relative asymptotic variance of the LSTD committor estimator with $\tau=1$ grows \emph{more slowly} than $n^3$. Again empirical estimates of the relative MSE of the LSTD estimator with $M=10 n^3$ agree well with corresponding relative asymptotic variances and show the same trend.
\begin{figure}[h]
\centerline{
    \includegraphics[width=\textwidth]{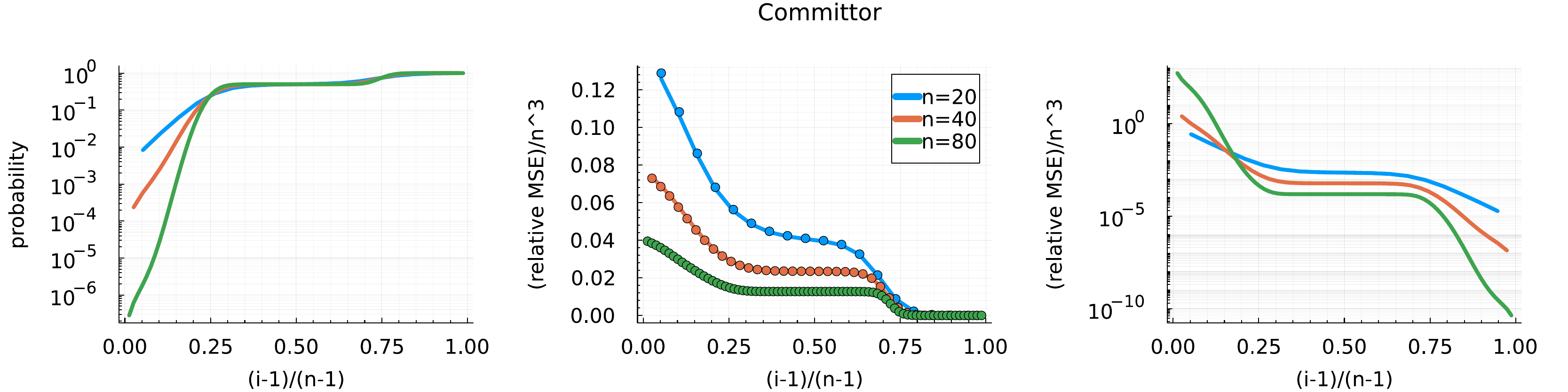}
    }
    \caption{Left: the (exact) committor function $u(i)$ with $n=20,\ 40,$ and $80$. Middle: the relative asymptotic variance (solid lines) and the relative empirical MSE (circles) of the LSTD estimator with $\tau=1$. The relative empirical MSE are obtained with sample sizes $M = 10 n^3$. Right: the (exact) relative asymptotic MSE of the MC estimator.}
    \label{fig:2}
    \vspace{-1em}
\end{figure}

Together, the mean first passage time and the committor estimation problems in this section strongly suggest a significant (polynomial versus exponential in $n$) advantage for TD methods over MC. In the next section we will prove that the performance advantage observed on these two examples holds in significant generality.

 \noindent \textbf{Effect of the lag time.}
In the above experiments we have examined the error of the LSTD estimator with $\tau=1$. In practice, the best choice of  $\tau$ is long enough to avoid a nearly diagonal $S^\tau$, but  not so long as to lose the  efficiency advantage of TD over MC that we have just observed.  
Our bounds in Theorem \ref{thm:clt}  also change with $\tau$. 
To better illustrate the effect of $\tau$ on the relative asymptotic variance of the LSTD estimator and on our bounds, we now consider a ``lazier'' version of the Markov chain in \eqref{eq:transition}. Specifically, we define $P(i, i\pm1) = \frac{p(i\pm1)}{10(p(i)+p(i\pm1))}$ for $i\pm 1\in D$, with the same boundary conditions as before. We again consider estimates of the mean first passage time and the committor, now with fixed $n=40$. We plot the maximum of the relative asymptotic variance, $\sigma_i^2/u^2(i)$, over $i\in D$ in Fig. \ref{fig:changetau}, along with the bound \eqref{eq:avar1} for the mean first passage time in the left panel and \eqref{eq:avar1cor} for the committor in the right panel. Though the plots do not extend to values of $\tau$ comparable with the largest values of $\mathbf{E}_i[T]$, the relative variance of the MC estimator corresponds to the asymptote of the true relative variance toward the right hand side of each plot. We observe that the true relative variance of the LSTD estimator is minimized in both cases for a choice of $\tau\approx 30$ and that, for this choice, the relative variance of the LSTD estimator is significantly smaller than the relative variance of the MC estimator. 
The bounds in \eqref{eq:avar1} and \eqref{eq:avar1cor} are designed to capture the high accuracy of LSTD for relatively small values of $\tau$ and we indeed see that they deteriorate as $\tau$ becomes very large.  However, at least in the committor case, the bound accurately reproduces the initial decrease in error that occurs when $\tau$ is increased above $\tau=1$.

\begin{figure}
    \centering
    \begin{minipage}{0.45\linewidth}
        \centering\includegraphics[width=0.8\linewidth]{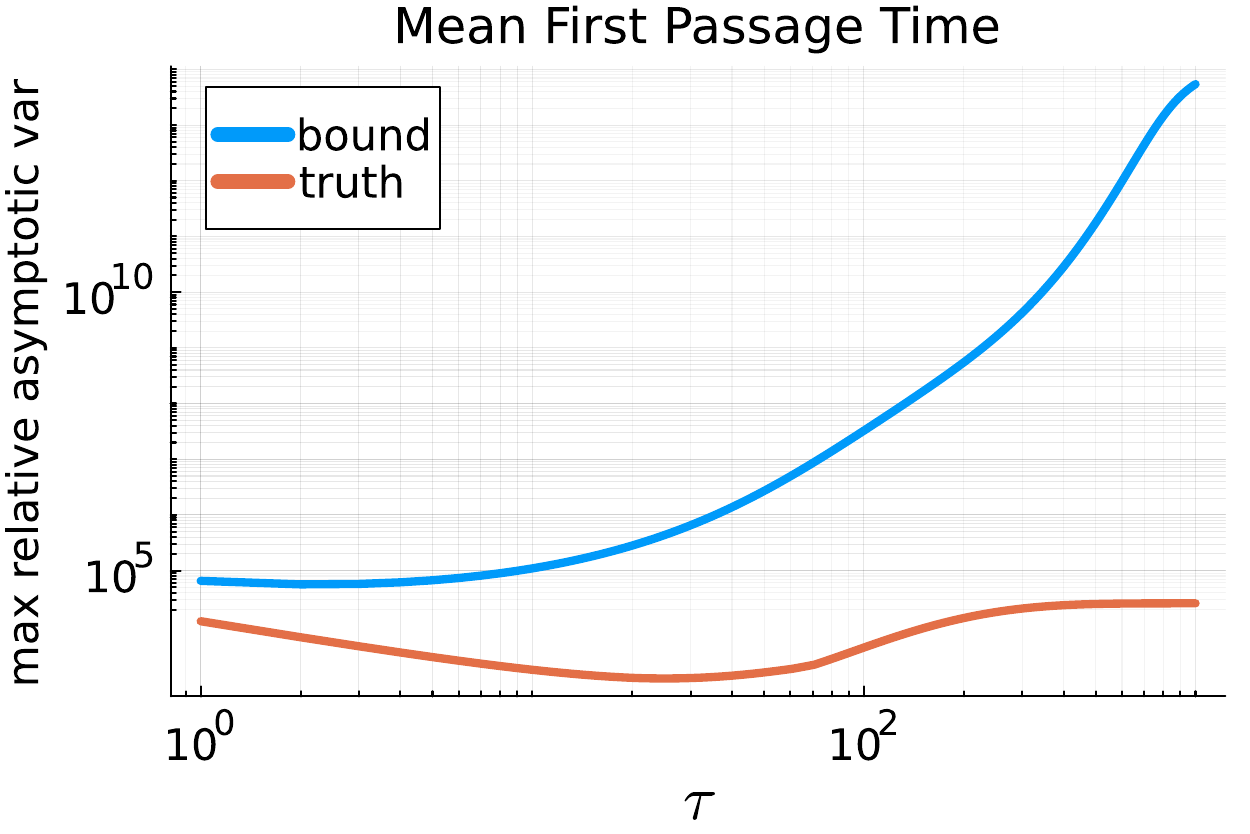}
    \end{minipage}
    \hfill
    \begin{minipage}{0.45\linewidth}
        \centering\includegraphics[width=0.8\linewidth]{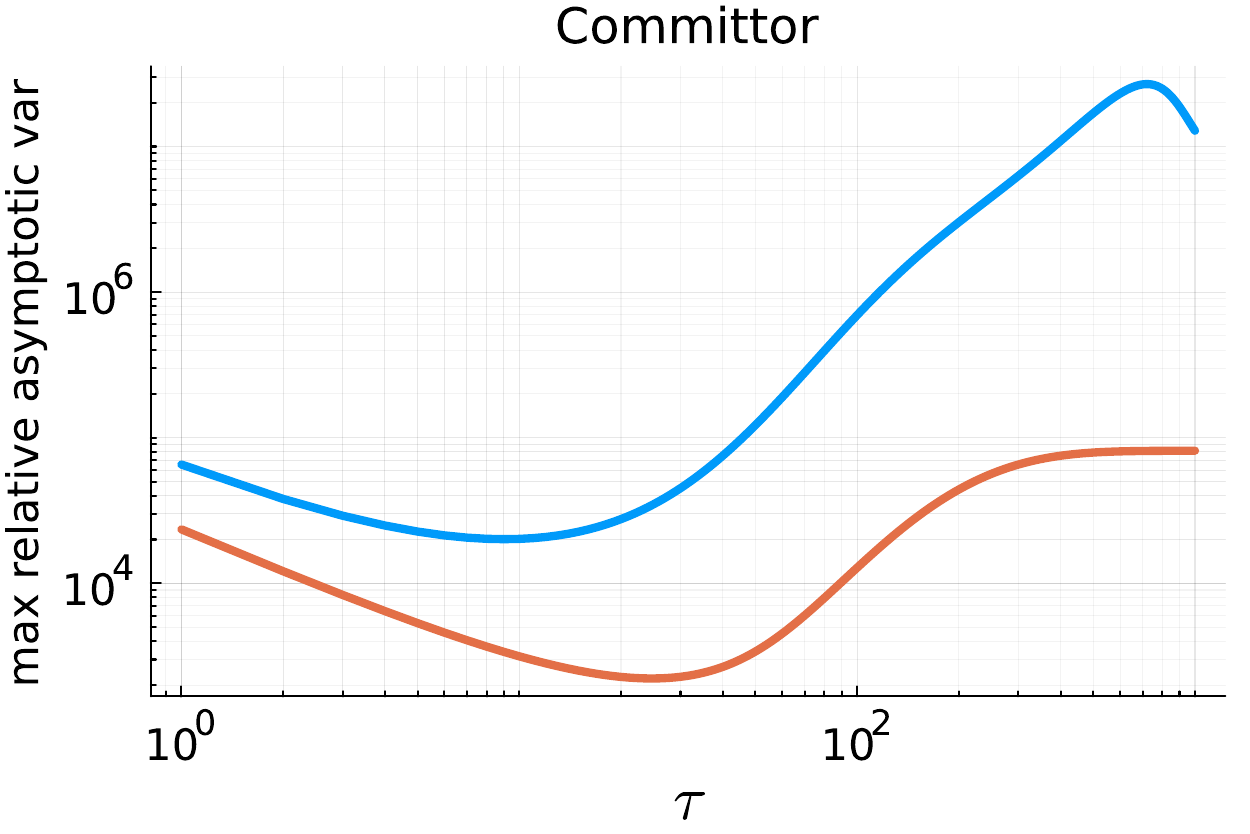}
    \end{minipage}
    \label{fig:changetau}
    \caption{The bound and the truth for the maximum relative asymptotic variance of the mean first passage time and the committor with varying lag time $\tau$. The number of states fixed at $n=40$. The relative asymptotic variance bounds for the mean first passage time and the committor are from  \eqref{eq:avar1} and \eqref{eq:avar1cor} respectively. }
    \vspace{-1em}
\end{figure}

\noindent \textbf{Effect of the initial distribution.}  So far, in our experiments, we have chosen $\mu$ to be the uniform distribution on $D$. A natural alternative strategy is to harvest many short trajectories from a single, much longer trajectory. Under ergodicity assumptions on $P$, these short trajectories are then approximately drawn from the invariant distribution of $P$. 
In Fig. \ref{fig:invariantTD}, we present the relative asymptotic variance and the empirical estimates of the relative MSE of the mean first passage time and the committor,
with $P$ being the same as in \eqref{eq:transition},
and $\mu$ chosen to be the invariant distribution $p$ in \eqref{eq:invariantdist} conditioned within $D$. For $\tau = 1$ and $n = 20, 40,$ and $80$, the maximum relative asymptotic variances of the LSTD estimators grow exponentially with $n$. With $M=10n^3$, the empirical relative MSE estimators with $30000$ independent copies of the LSTD estimators agree well 
with the relative asymptotic variance for $n=20$ and $40$. For $n=80$, the empirical linear system \eqref{eq:FKapprox} fails to define the LSTD estimators with high probability, and the empirical error is undefined. These results should be contrasted with the much smaller errors shown in the middle panels of Figs. \ref{fig:T} and \ref{fig:2},  corresponding to uniformly chosen initial conditions. We expect that the results in Fig. \ref{fig:invariantTD} would be somewhat different if, instead of initializing trajectories independently from the conditional invariant distribution, we had harvested correlated initial conditions from a single ergodic trajectory. Nonetheless, the results indicate that the choice of $\mu$ can have a significant impact on the performance of LSTD for prediction of rare events. 

\begin{figure}
    \centering
    \begin{minipage}{0.45\linewidth}
        \centering\includegraphics[width=0.8\linewidth]{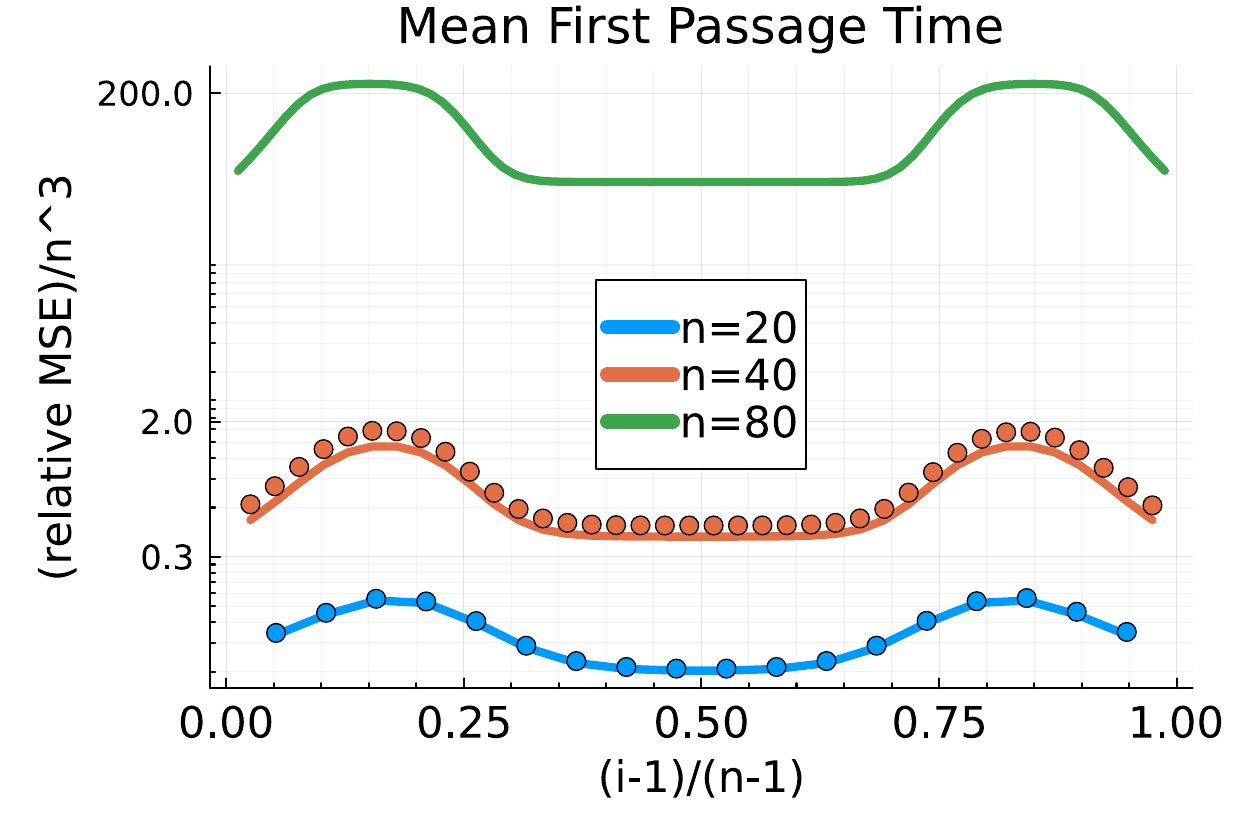}
    \end{minipage}
    \hfill
    \begin{minipage}{0.45\linewidth}
        \centering\includegraphics[width=0.8\linewidth]{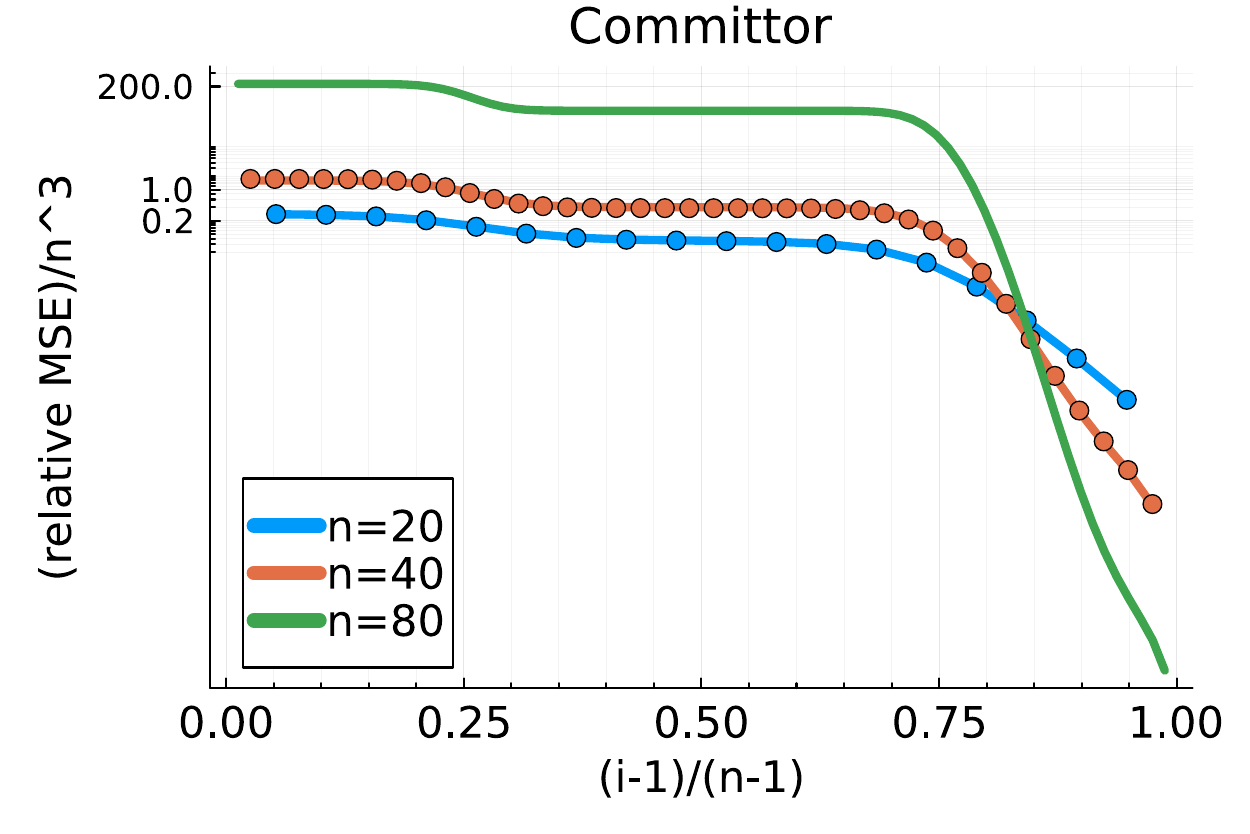}
    \end{minipage}
    \caption{The relative asymptotic variance (solid lines) and the relative empirical MSE (circles) of the LSTD estimators of the mean first passage time and the committor, with $\mu$ being the invariant distribution $p$ conditioned within $D$.  Note the scale is logarithmic.   The relative empirical MSE are obtained with sample sizes $M = 10 n^3$. For $n=80$ the TD estimator fails with high probability and the empirical error is undefined. }
    \label{fig:invariantTD}
    \vspace{-1em}
\end{figure}



\section{Rare event assumptions and upper bound}
\label{sec:rare}
Our final relative asymptotic variance bounds will rigorously establish the dramatic advantage of TD over MC that we observed in the experimental results of Section \ref{sec:experiments}. As in that section, we focus on the behavior of relative variance as $n$ increases.  Our results rely on several basic assumptions. On the one hand, all of these assumptions are satisfied by the Markov chain examined in Section \ref{sec:experiments}. On the other hand, the results in this section cover considerably more general Markov chains. The assumptions concern certain structural properties of the Markov chain as $n$ increases, but crucially, they allow both the  typical escape time and the relative variance of the MC estimator to grow \emph{exponentially} in $n$.

To ensure that a unique $\tilde u$ will exist for large enough $M$, we make the following minorization assumption on $\mu$.
\begin{asmp}[Lower bound on $\mu$]\label{asmp:mulb} For some constant $\alpha>0$, independent of $n$, and all $i\in D$, 
$
\mu(i) \geq \frac{\alpha}{n}$.
\end{asmp}
Recall that the uniform distribution on $D$ was used to generate initial conditions for the numerical tests described in Section \ref{sec:experiments}. The invariant distribution $p$ in \eqref{eq:invariantdist} conditioned within $D$, instead, violates this assumption, as its minimum is exponentially small in $n$.

As mentioned  in Section \ref{sec:contribution} the perturbations, $\tilde{S}^\tau - S^\tau$, relevant for analysis of the LSTD estimator are far from the worst case perturbations characterizing classical perturbation bounds for Eq. \eqref{eq:FK}.  
Each entry of the 
matrix $\tilde S^t$ is the empirical frequency of a specific transition. 
As a result, the variance of $\tilde S^t(i,j)$ is proportional to $S^t(i,j)\left(1-S^t(i,j)\right)\leq S^t(i,j)$, and we can characterize the perturbations by characterizing the entries themselves.  Different entries of $S^t$ can have very different magnitudes, and many Markov processes exhibit higher transition probabilities between states that are ``close'' according to some metric and small  transition probabilities  between states that are ``far away''.  To express this notion in assumptions without resorting to any topology within which the state space may be embedded, we upgrade $[n]$ to an unweighted directed graph $G$ by including edges only along transitions with  significant probability. Our additional assumptions will be stated in terms of properties of this ``minorizing graph.''


Specifically, we introduce a directed edge from state $i\in G$ to another state $j\in G$ if $S^\tau(i,j) \geq c$, 
where $c$ is a positive constant independent of the number of states $n$. 
\begin{asmp}[Connected minorizing graph]\label{asmp:connectivity}
    {{For some constant $c>0$ independent of $n$, any two states $i\in D$ and $j\in [n]$ are connected by a path in $G$.}}
\end{asmp}
When $P$ is irreducible, as assumed in Theorem \ref{thm:clt}, and $n$ is fixed, we can always find a $c>0$ so that any $i\in D$ and $j\in [n]$ are connected by a path in $G$ for $\tau=1$.
Assumption \ref{asmp:connectivity} emphasizes the independence of $n$, asserting that the transition probabilities above the threshold $c$  can, on their own, preserve the connectivity of the graph while $n$ increases. Recall that the Markov chain studied in Section \ref{sec:experiments} satifies $P(i,i\pm 1)\geq 1/(2(1+e))$, implying that Assumption \ref{asmp:connectivity} is satisfied with the choice $c = 1/(2(1+e))$. 

Let $d(i,j)$ be the length of the shortest path from state $i$ to state $j$ in $G$.  When the graph is undirected, $d$ is a distance on $G$, but it need not be. Finally, we characterize the decay of the transition probabilities of pairs of states that are ``far away'' according to  $G$.


\begin{asmp}[``Sub-Gaussian'' transition probability tails]\label{asmp:subg}
     For some constants $\beta>0$ and $C$, independent of $n$, for all $i\neq j$, and $t\leq \tau$,
    $
     S^t(i,j) \leq C e^{-\beta d^2(i,j) }$.
\end{asmp}

  Assumption \ref{asmp:subg} stipulates 
 that transition probabilities decay fast enough for distant states in $G$. 
 For the Markov chain studied in Section \ref{sec:experiments}, when $\tau=1$ this assumption is satisfied because transition probabilities are zero for states that are not nearest neighbors.  Assumptions \ref{asmp:connectivity} and \ref{asmp:subg} essentially characterize the locality of the transitions.  These assumptions are practically reasonable even when, as in the example in Section \ref{sec:experiments}, the global landscape of the transitions exhibits challenging traits such as multi-modality.

   \begin{ill*}  The left panel of Fig. \ref{fig:pruning-graph}, depicts a general (symmetric) Markov chain on a graph. Darker edges indicate higher transition probabilities.  The transition probabilities are chosen to decay with Euclidean distance in the plane, consistent with, for example, a Gaussian transition kernel in the plane.  The minorizing graph on the right retains only those edges exceeding the threshold $c$, which is chosen large enough to remove most distant edges, but small enough to result in a connected graph. 
\begin{figure}
\vspace{-1em}
    \centering
\includegraphics[scale=0.45]{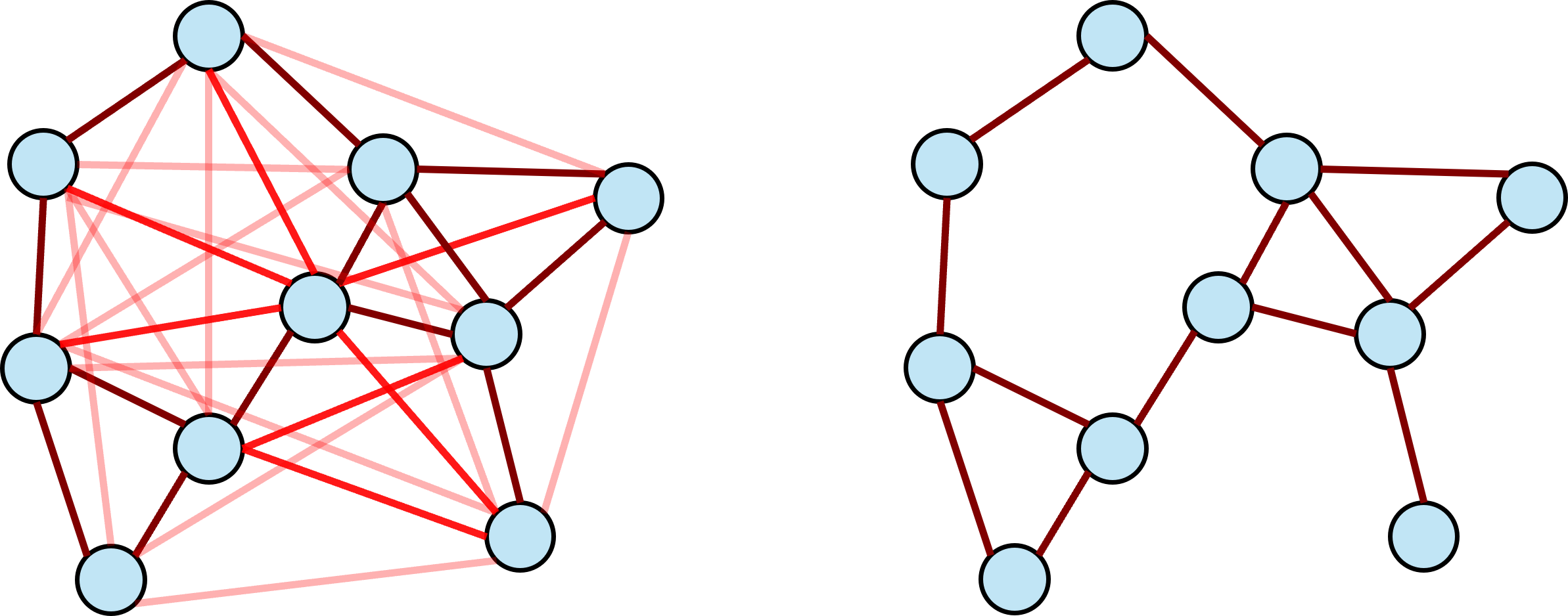}
    \caption{Left: an undirected graph with edges colored according to the transition probabilities. An edge with darker color corresponds to a transition with higher probability. Right: after pruning edges with low transition probabilities, the minorizing graph stays connected.}
    \label{fig:pruning-graph}
    \vspace{-1.5em}
\end{figure}
\vspace{-.5em}
\end{ill*}

 In Section \ref{sec:experiments}, we considered the effect of the choice of $\tau$ on relative asymptotic variance.  In that example, a larger choice of $\tau$ initially increased the probability of nearest neighbor transitions and would allow for a larger choice of $c$. As $\tau$ increases further however, the probabilities of distant transitions would become larger, potentially decreasing the maximum allowed choice of $c$ \emph{and} possibly  requiring a  smaller choice of $\beta$ and/or a larger choice of $C$ in Assumption \ref{asmp:subg}. 
 The quantitative interplay among $c$, $\beta$ and $C$ are determined by the specific Markov chain under study, but we expect similar behavior for common transition probabilities such as Gaussian transition kernels.

As the experiments in Section \ref{sec:experiments} clearly demonstrate, 
  these assumptions  allow both the typical escape time and the relative variance of the MC estimator to scale exponentially with $n$. 
 The failure of  MC in these scenarios does not contradict this paper's theoretical results because the MC estimator does not satisfy Assumption \ref{asmp:connectivity}.
 Indeed, the MC estimator corresponds to  $\tau = \infty$, for which $S^\tau(i,j)$ can only be  non-zero  if $i \in D$, $j \in D^c$.

With these assumptions, we can now state our upper bound for the rare event setting, whose proof is very simple but informative. The key idea is that, under these assumptions, $Q^\tau(k, \ell)$ in the denominator is indeed much larger than the transition probabilities $S^t(k,\ell)$ in the numerator.
\begin{thm}\label{thm:avar2}
Under Assumptions \ref{asmp:mulb},  \ref{asmp:connectivity} and \ref{asmp:subg},  we have the following asymptotic variance bound:
    \begin{equation}\label{eq:varbnd2}
        \frac{\sigma_i^2}{u^2(i)}
        \leq
        \frac{C} {\alpha} \tau e^{\frac{(\log c)^2}{\beta}}\, n^3 .
\end{equation}
When $R(i)=0$ for all $i \in D$,
\begin{equation}\label{eq:varbnd2cor}
        \frac{\sigma_i^2}{u^2(i)}
        \leq
        \frac{C} {\alpha} e^{\frac{(\log c)^2}{\beta}}\, n^3  .
\end{equation}
\vspace{-1.5em}
\end{thm}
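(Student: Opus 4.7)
The plan is to apply the bound in Theorem~\ref{thm:clt} (respectively Corollary~\ref{cor:avar1}) term-by-term. Three ingredients will enter, one per assumption: Assumption~\ref{asmp:mulb} gives $1/\mu(k) \leq n/\alpha$; Assumption~\ref{asmp:subg} gives the numerator bound $\sum_{t=1}^{\tau} S^t(k,\ell) \leq C\tau\, e^{-\beta d^2(k,\ell)}$ (since $k \neq \ell$ in the sum); and Assumption~\ref{asmp:connectivity} will supply the critical lower bound $Q^\tau(k,\ell) \geq c^{\,d(k,\ell)}$. There are at most $n(n-1) < n^2$ index pairs $(k,\ell)$ in the sum, so once each summand is bounded by $(nC\tau/\alpha)\, e^{(\log c)^2/\beta}$, summing gives \eqref{eq:varbnd2}.

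The denominator lower bound comes from exhibiting an explicit realization of the event defining $Q^\tau(k,\ell)$. I would fix a shortest path $k = v_0, v_1, \ldots, v_d = \ell$ in the minorizing graph $G$, chosen using Assumption~\ref{asmp:connectivity} to stay in $D \cup \{\ell\}$ so that no intermediate vertex lies in $D^{\textrm{c}} \setminus \{\ell\}$; shortest-path minimality also guarantees no intermediate return to $k$. By construction of $G$, each one-step probability $S^\tau(v_s, v_{s+1})$ is at least $c$, so the event that $Y^\tau_t$ follows this exact trajectory from $k$ to $\ell$ has probability at least $c^{d(k,\ell)}$, and contributes to $Q^\tau(k,\ell)$. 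Combining the three ingredients, each summand is at most
\begin{equation*}
\frac{nC\tau}{\alpha}\, \exp\bigl(-\beta\, d^2(k,\ell) - 2\, d(k,\ell)\, \log c\bigr).
\end{equation*}
Completing the square in $d$ yields $-\beta d^2 - 2 d \log c = -\beta\bigl(d + (\log c)/\beta\bigr)^2 + (\log c)^2/\beta \leq (\log c)^2/\beta$, which gives the claimed bound. For \eqref{eq:varbnd2cor}, the same argument applied to Corollary~\ref{cor:avar1} (whose numerator is just $S^\tau(k,\ell)$) removes the factor of $\tau$.

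I expect the main obstacle to be the careful justification of the denominator bound $Q^\tau(k,\ell) \geq c^{d(k,\ell)}$. A shortest path in $G$ from $k$ to $\ell$ could a priori traverse arbitrary vertices in $D^{\textrm{c}}$, whereas the event defining $Q^\tau$ excludes visits to $D^{\textrm{c}} \setminus \{\ell\}$; one needs to verify that a shortest path staying in $D \cup \{\ell\}$ exists with length at most $d(k,\ell)$ (or that Assumption~\ref{asmp:connectivity} can be read with this constraint built in). Everything else reduces to straightforward completion-of-the-square together with the count of at most $n^2$ pairs.
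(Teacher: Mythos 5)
Your proposal is correct and follows essentially the same route as the paper: lower-bound $Q^\tau(k,\ell)$ by $c^{\,d(k,\ell)}$ via a shortest path in the minorizing graph, plug the three assumptions into \eqref{eq:avar1} (resp.\ \eqref{eq:avar1cor}), and complete the square in $d(k,\ell)$, which is exactly the paper's ``optimizing the summand'' step. The obstacle you flag resolves itself: states in $D^{\textrm{c}}$ are absorbing under $S$, so $G$ has no outgoing edges from $D^{\textrm{c}}$ and any path in $G$ from $k\in D$ to $\ell$ necessarily stays in $D$ until its final vertex, while shortest-path minimality rules out a return to $k$.
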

\begin{proof}[Proof of Theorem \ref{thm:avar2}] Because $Q^\tau(k, \ell)$ is the sum of the probabilities of all paths connecting states $k$ and state $\ell$ that do not return to $k$ and do not exit $D$ (except through $\ell$),  Assumption \ref{asmp:connectivity} 
implies that 
\[
Q^\tau(k, \ell) \geq c^{ d(k,\ell)}.
\]
Plugging this bound and the ones in Assumptions \ref{asmp:mulb} and \ref{asmp:subg} into \eqref{eq:avar1} we find that
\begin{equation*}
       \frac{\sigma_i^2}{u^2(i)}
        \leq 
         \frac{C}{\alpha}\, \tau\, n\,  
        \sum_{\substack{k\in D,\, \ell\in [n],\\ \ell\neq k}}
        e^{-d(k,\ell)( 2  \log c + \beta\, d(k,\ell))}.
\end{equation*}
Optimizing the summand over $d(k,\ell)$ gives the bound in \eqref{eq:varbnd2}. The bound in  \eqref{eq:varbnd2cor} follows from the same argument using \eqref{eq:avar1cor} instead of  \eqref{eq:avar1}.
\end{proof}
Remarkably, the total amount of data (measured in observed transitions of $X_t$) required to achieve a fixed relative accuracy has reduced from as bad as exponential in $n$ for the MC estimator to no worse than $n^3$ for the short trajectory estimator. 




\section{Conclusions and future work}
\label{sec:futurework}
In this article we show that the LSTD method can produce relatively accurate estimators of rare event statistics with a data set of observed Markov chain transitions that is dramatically smaller than would be required by  the Monte Carlo estimator.  In particular, the LSTD estimator can achieve relative accuracy with a number of observed transitions much smaller than typical timescale of the rare event. This contrasts with the classical worst case perturbation bounds, which  predict a large error when the typical timescale of the rare event is large.

Generalization of our basic conclusions beyond the tabular setting is a natural goal for future work. Our proof strategy can be used to establish general, entry-wise perturbation bounds for linear systems of the form in \eqref{eq:FK}. A version of those bounds that applies to Markov processes in continuous spaces would have many interesting consequences, including in the analysis of both approximation and estimation error for TD approaches beyond the tabular setting.


We have not considered online TD approaches \citep{sutton_reinforcement_2018}.  In the tabular setting considered here, standard online TD corresponds to a  variant of classical Richardson iteration in which the residual of \eqref{eq:FK} is replaced by an independent realization of the residual of \eqref{eq:FKapprox} at each iteration. Both deterministic Richardson iteration and online TD will converge very slowly when the typical timescale of the rare event  is large. This issue has been explored in \cite{Strahan2023inla}, where the authors suggest a batch version of subspace iteration for online policy evaluation and demonstrate improved  convergence.  The data requirements of that scheme should be studied theoretically.


\newpage
\section*{Acknowledgements}
This work was supported by the National Science Foundation through award DMS-2054306. We thank Aaron R.~Dinner, John Strahan, and Robert J.~Webber for their help exploring early versions of our results. We are grateful to Joan Bruna, Yaqi Duan, Yuehaw Khoo, Yiping Lu, Christopher Musco, and Jonathan Niles-Weed for helpful discussions. This work was supported in part through the NYU IT High Performance Computing resources, services, and staff expertise.

\bibliographystyle{plainnat}
\bibliography{references}

\begin{thebibliography}{33}
\providecommand{\natexlab}[1]{#1}
\providecommand{\url}[1]{\texttt{#1}}
\expandafter\ifx\csname urlstyle\endcsname\relax
  \providecommand{\doi}[1]{doi: #1}\else
  \providecommand{\doi}{doi: \begingroup \urlstyle{rm}\Url}\fi

\bibitem[Antoszewski et~al.(2021)Antoszewski, Lorpaiboon, Strahan, and
  Dinner]{Antoszewski2021phenol}
Adam Antoszewski, Chatipat Lorpaiboon, John Strahan, and Aaron~R. Dinner.
\newblock Kinetics of phenol escape from the insulin r6 hexamer.
\newblock \emph{The Journal of Physical Chemistry B}, 125\penalty0
  (42):\penalty0 11637--11649, 2021.
\newblock \doi{10.1021/acs.jpcb.1c06544}.
\newblock URL \url{https://doi.org/10.1021/acs.jpcb.1c06544}.
\newblock PMID: 34648712.

\bibitem[Baird(1995)]{Baird1995convdiv}
Leemon Baird.
\newblock Residual algorithms: Reinforcement learning with function
  approximation.
\newblock In Armand Prieditis and Stuart Russell, editors, \emph{Machine
  Learning Proceedings 1995}, pages 30--37. Morgan Kaufmann, San Francisco
  (CA), 1995.
\newblock ISBN 978-1-55860-377-6.
\newblock \doi{https://doi.org/10.1016/B978-1-55860-377-6.50013-X}.
\newblock URL
  \url{https://www.sciencedirect.com/science/article/pii/B978155860377650013X}.

\bibitem[Bhandari et~al.(2018)Bhandari, Russo, and Singal]{russo2018finitetd}
Jalaj Bhandari, Daniel Russo, and Raghav Singal.
\newblock A finite time analysis of temporal difference learning with linear
  function approximation.
\newblock In Sébastien Bubeck, Vianney Perchet, and Philippe Rigollet,
  editors, \emph{Proceedings of the 31st Conference On Learning Theory},
  volume~75 of \emph{Proceedings of Machine Learning Research}, pages
  1691--1692. PMLR, 06--09 Jul 2018.
\newblock URL \url{https://proceedings.mlr.press/v75/bhandari18a.html}.

\bibitem[Bradtke and Barto(1996)]{Bradtke1996lstd}
Steven~J. Bradtke and Andrew~G. Barto.
\newblock Linear least-squares algorithms for temporal difference learning.
\newblock \emph{Machine Learning}, 22\penalty0 (1):\penalty0 33--57, Jan 1996.
\newblock ISSN 1573-0565.
\newblock \doi{10.1023/A:1018056104778}.
\newblock URL \url{https://doi.org/10.1023/A:1018056104778}.

\bibitem[Bucklew(2004)]{bucklew2004res}
James~Antonio Bucklew.
\newblock \emph{Introduction to Rare Event Simulation}.
\newblock Springer Series in Statistics. Springer New York, New York, New York,
  1 edition, 2004.
\newblock ISBN 978-0-387-20078-1.
\newblock \doi{10.1007/978-1-4757-4078-3}.

\bibitem[Cai et~al.(2019)Cai, Yang, Lee, and Wang]{Cai2019neuraltd}
Qi~Cai, Zhuoran Yang, Jason~D Lee, and Zhaoran Wang.
\newblock Neural temporal-difference learning converges to global optima.
\newblock In H.~Wallach, H.~Larochelle, A.~Beygelzimer, F.~d\textquotesingle
  Alch\'{e}-Buc, E.~Fox, and R.~Garnett, editors, \emph{Advances in Neural
  Information Processing Systems}, volume~32. Curran Associates, Inc., 2019.
\newblock URL
  \url{https://proceedings.neurips.cc/paper_files/paper/2019/file/98baeb82b676b662e12a7af8ad9212f6-Paper.pdf}.

\bibitem[Cheikhi and Russo(2023)]{Cheikhi2023td}
David Cheikhi and Daniel Russo.
\newblock On the statistical benefits of temporal difference learning.
\newblock In \emph{Proceedings of the 40th International Conference on Machine
  Learning}, ICML'23. JMLR.org, 2023.

\bibitem[Collett et~al.(2012)Collett, Martinez, and
  San~Martin]{collett_quasi-stationary_2012}
Pierre Collett, Servet Martinez, and Jamie San~Martin.
\newblock \emph{Quasi-{Stationary} {Distributions}}.
\newblock Probability and its {Applications}. Springer Berlin Heidelberg,
  October 2012.
\newblock ISBN 978-3-642-33130-5.
\newblock URL \url{https://link.springer.com/book/10.1007/978-3-642-33131-2}.

\bibitem[Dalal et~al.(2018)Dalal, Szorenyi, Thoppe, and
  Mannor]{Dalal2018finitetd}
Gal Dalal, Balazs Szorenyi, Gugan Thoppe, and Shie Mannor.
\newblock Finite sample analyses for td(0) with function approximation.
\newblock In \emph{Proceedings of the Thirty-Second AAAI Conference on
  Artificial Intelligence and Thirtieth Innovative Applications of Artificial
  Intelligence Conference and Eighth AAAI Symposium on Educational Advances in
  Artificial Intelligence}, AAAI'18/IAAI'18/EAAI'18. AAAI Press, 2018.
\newblock ISBN 978-1-57735-800-8.

\bibitem[Dann et~al.(2014)Dann, Neumann, and Peters]{dann14atdsurvey}
Christoph Dann, Gerhard Neumann, and Jan Peters.
\newblock Policy evaluation with temporal differences: A survey and comparison.
\newblock \emph{Journal of Machine Learning Research}, 15\penalty0
  (24):\penalty0 809--883, 2014.
\newblock URL \url{http://jmlr.org/papers/v15/dann14a.html}.

\bibitem[Dayan and Sejnowski(1994)]{Dayan1994tdconv}
Peter Dayan and Terrence~J. Sejnowski.
\newblock Td($\lambda$) converges with probability 1.
\newblock \emph{Machine Learning}, 14:\penalty0 295--301, 1994.
\newblock URL \url{https://api.semanticscholar.org/CorpusID:476312}.

\bibitem[Demmel(1997)]{demmel1997nla}
James~W. Demmel.
\newblock \emph{Applied Numerical Linear Algebra}.
\newblock Society for Industrial and Applied Mathematics, 1997.
\newblock \doi{10.1137/1.9781611971446}.
\newblock URL \url{https://epubs.siam.org/doi/abs/10.1137/1.9781611971446}.

\bibitem[Finkel et~al.(2021)Finkel, Webber, Gerber, Abbot, and
  Weare]{finkel2021learning}
Justin Finkel, Robert~J Webber, Edwin~P Gerber, Dorian~S Abbot, and Jonathan
  Weare.
\newblock Learning forecasts of rare stratospheric transitions from short
  simulations.
\newblock \emph{Monthly Weather Review}, 149\penalty0 (11):\penalty0
  3647--3669, 2021.

\bibitem[Finkel et~al.(2023{\natexlab{a}})Finkel, Gerber, Abbot, and
  Weare]{finkel2023revealing}
Justin Finkel, Edwin~P Gerber, Dorian~S Abbot, and Jonathan Weare.
\newblock Revealing the statistics of extreme events hidden in short weather
  forecast data.
\newblock \emph{AGU Advances}, 4\penalty0 (2):\penalty0 e2023AV000881,
  2023{\natexlab{a}}.

\bibitem[Finkel et~al.(2023{\natexlab{b}})Finkel, Webber, Gerber, Abbot, and
  Weare]{finkel2023data}
Justin Finkel, Robert~J Webber, Edwin~P Gerber, Dorian~S Abbot, and Jonathan
  Weare.
\newblock Data-driven transition path analysis yields a statistical
  understanding of sudden stratospheric warming events in an idealized model.
\newblock \emph{Journal of the Atmospheric Sciences}, 80\penalty0 (2):\penalty0
  519--534, 2023{\natexlab{b}}.

\bibitem[Grunewalder et~al.(2007)Grunewalder, Hochreiter, and
  Obermayer]{Grune2007lstdmc}
Steffen Grunewalder, Sepp Hochreiter, and Klaus Obermayer.
\newblock Optimality of lstd and its relation to mc.
\newblock In \emph{2007 International Joint Conference on Neural Networks},
  pages 338--343, 2007.
\newblock \doi{10.1109/IJCNN.2007.4370979}.

\bibitem[Grünewälder and Obermayer(2009)]{Grune2009unbiased}
Steffen Grünewälder and Klaus Obermayer.
\newblock The optimal unbiased value estimator and its relation to lstd, td and
  mc.
\newblock \emph{Machine Learning}, 83, 08 2009.
\newblock \doi{10.1007/s10994-010-5220-9}.

\bibitem[Guo et~al.(2024)Guo, Shen, Roux, and Dinner]{Guo2024civsp}
Spencer~C. Guo, Rong Shen, Beno\^{i}t Roux, and Aaron~R. Dinner.
\newblock Dynamics of activation in the voltage-sensing domain of {\it ciona
  intestinalis} phosphatase ci-vsp.
\newblock \emph{Nature Communications}, \penalty0 (15):\penalty0 1408, 2024.
\newblock \doi{10.1038/s41467-024-45514-6}.
\newblock URL \url{https://doi.org/10.1038/s41467-024-45514-6}.

\bibitem[Husic and Pande(2018)]{Husic2018msm}
Brooke~E. Husic and Vijay~S. Pande.
\newblock Markov state models: From an art to a science.
\newblock \emph{Journal of the American Chemical Society}, 140\penalty0
  (7):\penalty0 2386--2396, 2018.
\newblock \doi{10.1021/jacs.7b12191}.
\newblock URL \url{https://doi.org/10.1021/jacs.7b12191}.
\newblock PMID: 29323881.

\bibitem[Jaakkola et~al.(1993)Jaakkola, Jordan, and Singh]{Jaakkola1993conv}
Tommi Jaakkola, Michael Jordan, and Satinder Singh.
\newblock Convergence of stochastic iterative dynamic programming algorithms.
\newblock In J.~Cowan, G.~Tesauro, and J.~Alspector, editors, \emph{Advances in
  Neural Information Processing Systems}, volume~6. Morgan-Kaufmann, 1993.
\newblock URL
  \url{https://proceedings.neurips.cc/paper_files/paper/1993/file/5807a685d1a9ab3b599035bc566ce2b9-Paper.pdf}.

\bibitem[Jacques-Dumas et~al.(2023)Jacques-Dumas, van Westen, Bouchet, and
  Dijkstra]{Dumas2023committor}
Val{\'e}rian Jacques-Dumas, Ren{\'e}~M. van Westen, Freddy Bouchet, and Henk~A.
  Dijkstra.
\newblock Data-driven methods to estimate the committor function in conceptual
  ocean models.
\newblock \emph{Nonlinear Processes in Geophysics}, 30\penalty0 (2):\penalty0
  195--216, 2023.
\newblock \doi{10.5194/npg-30-195-2023}.
\newblock URL \url{https://npg.copernicus.org/articles/30/195/2023/}.

\bibitem[Lillicrap et~al.(2019)Lillicrap, Hunt, Pritzel, Heess, Erez, Tassa,
  Silver, and Wierstra]{lillicrap2019neuraltd}
Timothy~P. Lillicrap, Jonathan~J. Hunt, Alexander Pritzel, Nicolas Heess, Tom
  Erez, Yuval Tassa, David Silver, and Daan Wierstra.
\newblock Continuous control with deep reinforcement learning, 2019.

\bibitem[Lucente et~al.(2021)Lucente, Rolland, Herbert, and
  Bouchet]{Lucente2021coupling}
Dario Lucente, Joran Rolland, Corentin Herbert, and Freddy Bouchet.
\newblock Coupling rare event algorithms with data-based learned committor
  functions using the analogue markov chain, 2021.
\newblock URL \url{https://arxiv.org/abs/2110.05050}.

\bibitem[Schmid(2022)]{Schmid2022dmdrev}
Peter~J. Schmid.
\newblock Dynamic mode decomposition and its variants.
\newblock \emph{Annual Review of Fluid Mechanics}, 54\penalty0 (Volume 54,
  2022):\penalty0 225--254, 2022.
\newblock ISSN 1545-4479.
\newblock \doi{https://doi.org/10.1146/annurev-fluid-030121-015835}.
\newblock URL
  \url{https://www.annualreviews.org/content/journals/10.1146/annurev-fluid-030121-015835}.

\bibitem[Srikant and Ying(2019)]{srikant2019finitetd}
R.~Srikant and Lei Ying.
\newblock Finite-time error bounds for linear stochastic approximation and td
  learning.
\newblock In Alina Beygelzimer and Daniel Hsu, editors, \emph{Proceedings of
  the Thirty-Second Conference on Learning Theory}, volume~99 of
  \emph{Proceedings of Machine Learning Research}, pages 2803--2830. PMLR,
  25--28 Jun 2019.
\newblock URL \url{https://proceedings.mlr.press/v99/srikant19a.html}.

\bibitem[Strahan et~al.(2021)Strahan, Antoszewski, Lorpaiboon, Vani, Weare, and
  Dinner]{strahan_long-time-scale_2021}
John Strahan, Adam Antoszewski, Chatipat Lorpaiboon, Bodhi~P. Vani, Jonathan
  Weare, and Aaron~R. Dinner.
\newblock Long-time-scale predictions from short-trajectory data: {A} benchmark
  analysis of the trp-cage miniprotein.
\newblock \emph{Journal of Chemical Theory and Computation}, 17\penalty0
  (5):\penalty0 2948--2963, May 2021.
\newblock ISSN 1549-9618.
\newblock \doi{10.1021/acs.jctc.0c00933}.
\newblock URL \url{https://doi.org/10.1021/acs.jctc.0c00933}.

\bibitem[Strahan et~al.(2023{\natexlab{a}})Strahan, Finkel, Dinner, and
  Weare]{Strahan2023nntraj}
John Strahan, Justin Finkel, Aaron~R. Dinner, and Jonathan Weare.
\newblock Predicting rare events using neural networks and short-trajectory
  data.
\newblock \emph{Journal of Computational Physics}, 488:\penalty0 112152,
  2023{\natexlab{a}}.
\newblock ISSN 0021-9991.
\newblock \doi{https://doi.org/10.1016/j.jcp.2023.112152}.
\newblock URL
  \url{https://www.sciencedirect.com/science/article/pii/S0021999123002474}.

\bibitem[Strahan et~al.(2023{\natexlab{b}})Strahan, Guo, Lorpaiboon, Dinner,
  and Weare]{Strahan2023inla}
John Strahan, Spencer~C. Guo, Chatipat Lorpaiboon, Aaron~R. Dinner, and
  Jonathan Weare.
\newblock {Inexact iterative numerical linear algebra for neural network-based
  spectral estimation and rare-event prediction}.
\newblock \emph{The Journal of Chemical Physics}, 159\penalty0 (1):\penalty0
  014110, 07 2023{\natexlab{b}}.
\newblock ISSN 0021-9606.
\newblock \doi{10.1063/5.0151309}.
\newblock URL \url{https://doi.org/10.1063/5.0151309}.

\bibitem[Sutton(1988)]{Sutton1988td}
Richard~S. Sutton.
\newblock Learning to predict by the methods of temporal differences.
\newblock \emph{Machine Learning}, 3\penalty0 (1):\penalty0 9--44, Aug 1988.
\newblock ISSN 1573-0565.
\newblock \doi{10.1007/BF00115009}.
\newblock URL \url{https://doi.org/10.1007/BF00115009}.

\bibitem[Sutton and Barto(2018)]{sutton_reinforcement_2018}
Richard~S. Sutton and Andrew~G. Barto.
\newblock \emph{Reinforcement Learning: An Introduction}.
\newblock The MIT Press, Cambridge, Massachusetts, second edition edition,
  2018.
\newblock ISBN 978-0-262-03924-6.

\bibitem[Thiede et~al.(2015)Thiede, Van~Koten, and Weare]{thiede2015sharp}
Erik Thiede, Brian Van~Koten, and Jonathan Weare.
\newblock Sharp entrywise perturbation bounds for markov chains.
\newblock \emph{SIAM Journal on Matrix Analysis and Applications}, 36\penalty0
  (3):\penalty0 917--941, 2015.
\newblock \doi{10.1137/140987900}.
\newblock URL \url{https://doi.org/10.1137/140987900}.

\bibitem[Thiede et~al.(2019)Thiede, Giannakis, Dinner, and
  Weare]{thiede_galerkin_2019}
Erik~H. Thiede, Dimitrios Giannakis, Aaron~R. Dinner, and Jonathan Weare.
\newblock Galerkin approximation of dynamical quantities using trajectory data.
\newblock \emph{The Journal of Chemical Physics}, 150\penalty0 (24):\penalty0
  244111, June 2019.
\newblock ISSN 0021-9606, 1089-7690.
\newblock \doi{10.1063/1.5063730}.

\bibitem[Tsitsiklis and Van~Roy(1997)]{Tsitsiklis1997tdfunc}
John~N. Tsitsiklis and Benjamin Van~Roy.
\newblock An analysis of temporal-difference learning with function
  approximation.
\newblock \emph{IEEE Transactions on Automatic Control}, 42\penalty0
  (5):\penalty0 674--690, 1997.
\newblock \doi{10.1109/9.580874}.

\end{thebibliography}

\newpage
\appendix
\section{Proof of Lemma \ref{lem:cond}}
\label{appd:cond}
\begin{proof}
    Irreducibiliy and aperiodicity of $S_D$ implies that $\nu$ is the unique left eigenvector of $S_D$ with largest eigenvalue, $\lambda_\textrm{max}$. Using the fact that the $u(i)= \mathbf{E}_i[T]$ solves  \eqref{eq:FK} with $R(i)=1$ for $i\in D$ and $R(i)=0$ for $i\notin D$, a direct calculation  yields $\mathbf{E}_\nu[T] = (1-\lambda^\tau_\textrm{max})^{-1} \mathbf{E}_\nu[T\wedge \tau]$. Therefore
    \[
     \lVert (I-S_D^\tau)^{-1} \rVert \geq (1-\lambda^\tau_\textrm{max})^{-1} = \frac{\mathbf{E}_\nu[T]}{\mathbf{E}_\nu[T\wedge \tau]} \geq \frac{\mathbf{E}_\nu[T]}{\tau}
    \]
    and
    \[
    \kappa^\tau = \lVert (I-S_D^\tau)^{-1} \rVert  \lVert I- S_D^\tau\rVert 
    \geq \frac{\mathbf{E}_\nu[T]}{\tau} \lVert I- S_D^\tau\rVert.
    \]
\end{proof}
\section{Properties of the Markov chain studied in Section \ref{sec:experiments}}
\label{appd: example}
\begin{proof}[Proof of lower bound of $\|I - S_D\|_2$]
Let $\hat S_D$ be the restriction of $S_D$ to row and column indices in $D$, i.e.\, the $(n-2) \times (n-2)$ submatrix in the middle of $S_D$. $\hat S_D$ is a substochastic matrix that satisfies detailed balance with respect to $p$ in \eqref{eq:invariantdist} for those entries in $D$. Therefore, $\hat S_D$ is similar to a real symmetric matrix, and all eigenvalues of $\hat S_D$ are real. Let $\lambda_{\textrm{min}}$ be the smallest eigenvalue of $\hat S_D$. We have 
\[\|I- S_D\|_2 \geq 1 - \lambda_{\textrm{min}}.\]
We will derive an upper bound of $\lambda_{\textrm{min}}$ to show that $\|I- S_D\|_2$ is bounded away from $0$.

Define an $(n-2)\times (n-2)$ diagonal matrix $D_p$ as $D_p(i,i) = p(i+1)$. Note the index adjustment is due to the size difference of $D_p$ and $p$, and similar adjustment will persist in this proof. By the detailed balance property $\hat S_D(i,j) p(i+1) = \hat S_D(j,i) p(j+1)$ for $i, j = 1, \dots, n-2$, the matrix $A = D_p^{\frac12} \hat S_D D_p^{-\frac12}$ is a symmetric matrix, and it has the same eigenvalues as $\hat S_D$.

Now we will give an upper bound of $\lambda_{\textrm{min}}$, the smallest eigenvalue of $A$. By Rayleigh quotient characterization of $\lambda_{\textrm{min}}$, 
 any length $n-2$ nonzero vector $v$ satisfies
$
\lambda_{\textrm{min}} \leq \frac{v^{\mathsf{T}} A v}{v^{\mathsf{T}}v}$. Specifically, we choose $v$ as 
\[
v(i) = \sqrt{\frac{2}{n-1}}\sin\left(\frac{i(n-2)\pi}{n-1}\right), \quad i = 1, 2, \dots, n-2.
\]
The motivation is that in the limit $n \to \infty$, $A$ will be related to a (weighted) finite difference matrix for the Laplace operator in one dimension, so we would like to use $v$ which is a unit eigenvector corresponding to the smallest eigenvalue of the Laplace operator.

Our remaining work is to calculate $v^{\mathsf{T}} A v$ and upper bound it. $A$ is a tridiagonal matrix with $A(i,i) = P(i+1, i+1)$, $A(i, i + 1) = 1/\left[2 \left( \sqrt{\frac{p_{i+1}}{p_{i+2}}} + \sqrt{\frac{p_{i+2}}{p_{i+1}}} \right)\right]$, $A(i-1, i ) = 1/\left[2 \left( \sqrt{\frac{p_{i}}{p_{i+1}}} + \sqrt{\frac{p_{i+1}}{p_{i}}} \right)\right]$. By direct calculation, with $\alpha = \frac{\pi}{n-1}$ we have
\[
\begin{split}
    v^{\mathsf{T}} Av = & \frac{2}{n-1}\sum_{i=1}^{n-2}\left[\left(1 - \frac{p_{i+2}}{2(p_{i+1} + p_{i+2})} - \frac{p_{i}}{2(p_{i+1}+p_i)}\right)\frac{1 - \cos(2i\alpha)}{2}\right]\\
    & + \frac{2}{n-1}\sum_{i=1}^{n-2}\frac14 \frac1{\sqrt{\frac{p_{i+2}}{p_{i+1}}} + \sqrt{\frac{p_{i+1}}{p_{i+2}}}}\left[-\cos \alpha + \cos((2i+1)\alpha)\right]\\
    & + \frac{2}{n-1}\sum_{i=1}^{n-2}\frac14 \frac1{\sqrt{\frac{p_{i+1}}{p_{i}}} + \sqrt{\frac{p_{i}}{p_{i+1}}}}\left[-\cos \alpha + \cos((2i-1)\alpha)\right].
\end{split}
\]

According to \eqref{eq:invariantdist}, $e^{-1} \leq |p(j)/p(i)| \leq e$ for $|j-i| = 1$. We also have the inequality $\sqrt{\frac{p_j}{p_i}} + \sqrt{\frac{p_i}{p_j}} \geq 2$. Applying these bounds and separately summing terms of cosines with positive and negative signs, we have an upper bound
\[
\begin{split}
    \lambda_{\textrm{min}} \leq v^{\mathsf{T}} Av \leq &\, \frac{n-2}{n-1}\left(1 - \frac{1}{1+e}\right) - \frac{1}{n-1}(1-\frac{1}{1+e})\sum_{i = \left\lceil \frac{n-1}{4} \right \rceil}^{\left\lfloor \frac{3}{4}(n-1)\right \rfloor} \cos(2i\alpha)\\
    & - \frac{1}{n-1}\left( 1 - \frac{1}{1+e^{-1}} \right) \left(\sum_{i = 1}^{\left\lfloor \frac{n-1}{4}\right\rfloor} \cos(2i\alpha) + \sum_{i = \left\lceil \frac{3(n-1)}{4}\right \rceil}^{n-2} \cos(2i\alpha) \right)\\
    & - \frac{\cos\alpha}{2(n-1)}\frac{(n-2)}{\sqrt{e}}\\
    & + \frac{1}{n-1}\frac{1}{2\sqrt{e}}\sum_{i = \left\lceil \frac{n+1}{4}\right\rceil}^{\left\lfloor \frac{3n-1}{4}\right\rfloor} \cos((2i-1)\alpha)\\
    & + \frac{1}{n-1}\frac12 \left( \sum_{i=2}^{\left\lfloor \frac{n+1}{4}\right\rfloor} \cos((2i-1)\alpha) + \sum_{i=\left\lceil\frac{3n-1}{4}\right\rceil}^{n-2}\cos((2i-1)\alpha)\right)\\
    & + \frac1{4(n-1)}\sum_{i=1,n}\cos((2i-1)\alpha).
\end{split}
\]
Using exact formulas of sum of cosines, we have
\[
\begin{split}
   \lambda_{\textrm{min}} \leq  v^{\mathsf{T}}A v \leq & \,\frac{n-2}{n-1}\left( 1 - \frac{1}{1+e}\right) - \frac{1}{n-1}\left(1 - \frac{1}{1+e}\right) \frac{\sin(\alpha a_{1n})}{\sin \alpha}\cos(\alpha b_{1n})\\
    & - \frac{1}{n-1}(1 - \frac{1}{1+e^{-1}})\left( \frac{\sin(\alpha a_{2n})}{\sin\alpha}\cos(\alpha b_{2n}) + \frac{\sin (\alpha a_{3n})}{\sin \alpha} \cos(\alpha b_{3n})\right)\\
    &- \frac{\cos\alpha}{2(n-1)}\frac{n-2}{\sqrt{e}}\\
    & + \frac{1}{n-1}\frac{1}{2\sqrt{e}}\frac{\sin(\alpha a_{4n})}{\sin\alpha}\cos(\alpha b_{4n}) \\
    & + \frac{1}{n-1}\frac{1}{2} \left[ \frac{\sin(\alpha a_{5n})}{\sin \alpha} \cos (\alpha b_{5n}) + \frac{\sin(\alpha a_{6n})}{\sin \alpha} \cos(\alpha b_{6n})\right]\\
    & + \frac1{4(n-1)}\sum_{i=1,n}\cos((2i-1)\alpha),
\end{split}
\]
with $a_{1n} = \lfl \frac{3(n-1)}{4} \rfl - \lcl \frac{n-1}{4} \rcl + 1$, $b_{1n} =  \lfl \frac{3(n-1)}{4} \rfl +\lcl \frac{n-1}{4} \rcl $, $a_{2n} = \lfl \frac{n-1}{4} \rfl$, $b_{2n} =\lfl \frac{n-1}{4} \rfl + 1$, $a_{3n} = n-1 - \lcl \frac{3(n-1)}{4}\rcl$, $b_{3n} = n-2 + \lcl \frac{3(n-1)}{4}\rcl$, $a_{4n} = \lfl \frac{3n-1}{4} \rfl - \lcl \frac{n+1}{4} \rcl + 1$, $b_{4n} = \lfl \frac{3n-1}{4} \rfl + \lcl \frac{n+1}{4} \rcl -1$, $a_{5n} = \lfl \frac{n+1}{4} \rfl - 1$, $b_{5n} = \lfl \frac{n+1}{4} \rfl + 1$, $a_{6n} = n-1 - \lcl \frac{3n-1}{4} \rcl$, $b_{6n} = n-3 + \lcl \frac{3n-1}{4} \rcl$. Note that $\alpha a_{1n}, \alpha a_{4n}\to \frac{\pi}{2}$ and $\alpha a_{in} \to \frac{\pi}{4}$  as $n\to\infty$ for all other $i$'s. Similarly, $\alpha b_{in}$ have corresponding limits. Also, $(n-1) \sin \alpha \to \pi$. Based on these limits, as $n \to \infty$, the right hand side of the inequality above also approaches a limit. Specifically in the limit,
\[
\limsup_{n\to \infty} \lambda_{\textrm{min}} \leq (1 - \frac{1}{1+e}) - \frac{1}{2\sqrt{e}} + (\frac12 - \frac{1}{1+e} + \frac{1}{1+e^{-1}} - \frac{1}{2\sqrt{e}})\frac{1}{\pi} \leq 0.64.
\]
Therefore, $\liminf_{n\to \infty}\|I - S_D\|_2 \geq 1 - \limsup_{n\to\infty} \lambda_{\textrm{min}} \geq 0.36 $. This shows that for large $n$, $\|I - S_D\|_2$ is bounded away from $0$.
\end{proof}
\begin{proof}[Proof of \eqref{eq: maxETscaling}]
\label{appd: maxET}
 We can find an explicit formula for $u$ by solving the linear system 
 \begin{equation}
 \frac{p(i-1)}{2(p(i) + p(i-1))}(u(i) - u(i-1)) = \frac{p(i+1)}{2(p(i) + p(i+1))}(u(i+1) - u(i)) + 1, \quad i = 2, 3, \dots, n-1
 \end{equation}
 with $u(1) = u(n) = 0$. This yields
\begin{align}
        u(k) & = 2\left(\sum_{i=1}^{n-1}\frac{p(i) + p(i+1)}{p(i)p(i+1)}\right)^{-1}\left[ \left( \sum_{i=1}^{k-1}\frac{p(i) + p(i+1)}{p(i)p(i+1)}\right) \left(\sum_{i=k}^{n-1}\frac{p(i) + p(i+1)}{p(i)p(i+1)} \left( \sum_{\ell=2}^i p(\ell)\right)\right)\right.\nonumber\\
        &\left. - \left( \sum_{i=k}^{n-1}\frac{p(i) + p(i+1)}{p(i)p(i+1)}\right) \left(\sum_{i=2}^{k-1}\frac{p(i) + p(i+1)}{p(i)p(i+1)} \left( \sum_{\ell=2}^i p(\ell)\right)\right) \right].
\end{align}
By the symmetry of $p$,
\begin{align}
        u\left(\frac {n+1} 2\right) & = 2\left(\sum_{i=1}^{n-1}\frac{p(i) + p(i+1)}{p(i)p(i+1)}\right)^{-1}\nonumber\\
        &\left[ \left( \sum_{i=1}^{\frac {n+1} 2-1}\frac{p(i) + p(i+1)}{p(i)p(i+1)}\right) \left(\sum_{i=\frac {n+1} 2}^{n-1}\frac{p(i) + p(i+1)}{p(i)p(i+1)} \left( \sum_{\ell=2}^i p(\ell)\right)\right)\right.\nonumber\\
        &\left. - \left( \sum_{i=\frac {n+1} 2}^{n-1}\frac{p(i) + p(i+1)}{p(i)p(i+1)}\right) \left(\sum_{i=2}^{\frac {n+1} 2-1}\frac{p(i) + p(i+1)}{p(i)p(i+1)} \left( \sum_{\ell=2}^i p(\ell)\right)\right) \right] \nonumber\\
        & = 2\left(\sum_{i=1}^{n-1}\frac{p(i) + p(i+1)}{p(i)p(i+1)}\right)^{-1}\nonumber\\
        &\left[ \left( \sum_{i=1}^{\frac {n+1} 2-1}\frac{p(i) + p(i+1)}{p(i)p(i+1)}\right) \left(\sum_{i=1}^{\frac{n-1}{2}}\frac{p(i) + p(i+1)}{p(i)p(i+1)} \left( \sum_{\ell=2}^{\frac{n+1}{2}}p(\ell) + \sum_{\ell=2}^{i} p(\ell)\right)\right)\right.\nonumber\\
        &\left. - \left( \sum_{i=1}^{\frac{n+1}{2}-1}\frac{p(i) + p(i+1)}{p(i)p(i+1)}\right) \left(\sum_{i=2}^{\frac {n+1} 2-1}\frac{p(i) + p(i+1)}{p(i)p(i+1)} \left( \sum_{\ell=2}^i p(\ell)\right)\right) \right]\nonumber\\
        & = 2\left(\sum_{i=1}^{n-1}\frac{p(i) + p(i+1)}{p(i)p(i+1)}\right)^{-1}\left[ \left( \sum_{i=1}^{\frac {n-1} 2}\frac{p(i) + p(i+1)}{p(i)p(i+1)}\right)^2\left( \sum_{\ell=2}^{\frac{n+1}{2}} p(\ell)\right)\right]\nonumber\\
        & \geq \frac12\left(\sum_{i=1}^{n-1}\frac{p(i) + p(i+1)}{p(i)p(i+1)}\right)^{-1}\left( \sum_{i=1}^{\frac {n-1} 2}\frac{p(i) + p(i+1)}{p(i)p(i+1)}\right)^2\nonumber\\
        & = \frac14\left(\sum_{i=1}^{\frac{n-1}{2}}\frac{p(i) + p(i+1)}{p(i)p(i+1)}\right).
\end{align}
We can derive bounds based the last expression. A loose bound uses the fact that the sum is larger than the reciprocal of the smallest probability, and this probability is exponentially small. 
\begin{equation}
u\left(\frac{n+1}{2}\right) \geq \frac 14 \max_i \frac{1}{p(i)} \geq \frac 14  \frac{3 \exp(\frac{n-1}{4\pi})}{\exp(-\frac{n-1}{8\pi})} = \frac{3}{4}\exp\left(\frac{3(n-1)}{8\pi}\right).
\end{equation}
More carefully, for any small $\varepsilon > 0$, there are $\mathcal{O}(n)$, $\cos(\frac{4\pi(i-1)}{n-1})$ terms that are smaller than $-1 +\varepsilon$, and for  these terms, $p(i) \leq \frac{1}{3} \exp(\frac{n-1}{4\pi}(-2 + \varepsilon))$. Therefore,
\begin{equation}
    u\left(\frac{n+1}{2}\right) \gtrsim n \exp\left(\frac{n-1}{4\pi}(2-\varepsilon)\right).
\end{equation}

\end{proof}
\begin{proof}[Proof of \eqref{eq: committor2scaling}]
\label{appd: committorscale}
This time $u(i)$ solves
\begin{equation}
    \frac{p(i-1)}{2(p(i) + p(i-1))}(u(i) - u(i-1)) = \frac{p(i+1)}{2(p(i) + p(i+1))}(u(i+1) -u(i)), \quad i = 2, 3, \dots, n-1
\end{equation}
with $u(1)=0$, $u(n) = 1$.
We find that
\begin{equation}
    u(2) = \frac{\frac{1}{p(1)} + \frac{1}{p(2)}}{\sum_{i=1}^{n-1} \frac{1}{p(i)} + \frac{1}{p(i+1)}}.
\end{equation}
A loose bound is
\begin{equation}
    u(2) \leq \frac{\frac{1}{p(1)} + \frac{1}{p(2)}}{\max_i \frac{1}{p(i)}} \leq \frac{\exp\left( -\frac{n-1}{4\pi}\right) + \exp\left(-\frac{n-1}{8\pi}\right)}{\exp\left( \frac{n-1}{8\pi}\right)} \lesssim \exp\left(-\frac{n-1}{4\pi}\right).
\end{equation}
More carefully, for any small $\varepsilon > 0$, there are $\mathcal{O}(n)$,  $\cos\left(\frac{4\pi(i-1)}{n-1}\right)$ terms that are smaller than $-1 + \varepsilon$, resulting in the bound
\begin{equation}
    u(2) \lesssim \frac{\exp\left(-\frac{n-1}{8\pi}\right)}{n \exp\left( \frac{n-1}{4\pi}(1-\varepsilon)\right)} = \frac{1}{n} \exp\left(-\frac{(n-1)}{8\pi}(3 - \varepsilon)\right).
\end{equation}

\end{proof}

\section{Proofs of main results}
\label{sec:allproofs}
\begin{proof}[Proof of \eqref{eq:sigmaimain} in Theorem \ref{thm:clt} ]
Each entry of $\tilde S^t-S^t$ is a ratio of two random variables, as
\begin{equation}\label{eq:Sasratio}
    \tilde S^t(k, \ell) - S^t(k, \ell) = \frac{\frac1M\sum_{j=1}^M \mathbf{1}_{\{k\}}(X_0^j)\left(\mathbf{1}_{\{\ell\}}(X_{t\wedge T^j}^j) - S^t(k, \ell) \right)}{\frac1M\sum_{j=1}^M \mathbf{1}_{\{k\}}(X_0^j)}
\end{equation}
We will prove all entries of $\tilde S^t - S^t$ for all $1\leq t \leq \tau$ satisfy a central limit theorem by a central limit theorem for the (normalized) numerators, and then apply Slutsky's theorem to include the denominators.
By an argument similar to the one used to establish Lemma 1 in \cite{thiede2015sharp}, 
$u$ is a differentiable function of entries of $S^t$. The delta method, therefore, transforms a central limit theorem for these entries into a central limit theorem for $u$.

 Let $\tilde M^t$ denote a matrix containing only the (normalized) numerators of $\tilde S^t - S^t$, that is, $\tilde M^t(k, \ell) = \frac1M\sum_{j=1}^M \mathbf{1}_{\{k\}}(X_0^j)\left(\mathbf{1}_{\{\ell\}}(X_{t\wedge T^j}^j) - S^t(k, \ell) \right)$. Let $\tilde M^{1:\tau}$ represent a long vector formed by flattening out each matrix $\tilde M^t$ with $1 \leq t \leq \tau$ in row major order and gluing up in time order. Then, since trajectories are independent, $\tilde M^{1:\tau}$
 satisfies a central limit theorem as 

    \begin{equation}
        \label{eq:cltMt}
        \sqrt{M} \tilde M^{1:\tau} \xrightarrow{\mathcal{D}} \mathcal{N}(0, \Sigma_\subsig^{1:\tau}).
    \end{equation}

 For convenience, let $\Sigma_\subsig^{1:\tau}((t_1,k_1, \ell_1), (t_2, k_2, \ell_2))$ denote the entry of $\Sigma_\subsig^{1:\tau}$ that records the covariance
 \begin{equation}
    \label{eq:defSigM}
      \Cov\left[\mathbf{1}_{\{k_1\}}(X_0)\left(\mathbf{1}_{\{\ell_1\}}(X_{t_1\wedge T}) - S^{t_1}(k_1, \ell_1) \right), \mathbf{1}_{\{k_2\}}(X_0)\left(\mathbf{1}_{\{\ell_2\}}(X_{t_2\wedge T}) - S^{t_2}(k_2, \ell_2) \right)\right].
 \end{equation}

 Since
$\mathbf{1}_{\{k \}}(X_0)$ for different $k$ can not be nonzero at the same time, we immediately obtain
\begin{equation} \label{eq:multinomial}
    \Sigma_\subsig^{1:\tau} ((t_1, k_1, \ell_1), (t_2, k_2, \ell_2)) = 0. \quad \text{for } k_1 \neq k_2 \text{ and any } t_1, t_2, \ell_1, \ell_2.
\end{equation}
Now we deal with the denominators. Let $\tilde \mu$ be the vector containing all the (normalized) denominators, that is, $\tilde \mu(k) = \frac{1}{M}M_k$. 
Then, again because of independence of trajectories, we have a weak law of large numbers as 
\begin{equation}
    \label{eq:wllnM0}
    \tilde \mu \xrightarrow{\mathcal{P}} \mu,
\end{equation}
where we recall that $\mu$ is the initial distribution supported on $D$, and $\xrightarrow{\mathcal{P}}$ means convergence in probability.

Combining \eqref{eq:Sasratio}, \eqref{eq:cltMt}, \eqref{eq:multinomial}, and \eqref{eq:wllnM0} , by Slutsky's theorem, we obtain a central limit theorem for $\tilde S$ as 
\begin{equation}\label{eq:cltS}
    \sqrt{M} \left(\tilde S^{1:\tau} - S^{1:\tau} \right) \xrightarrow{\mathcal{D}} \mathcal{N}(0, \Sigma^{1:\tau}),
\end{equation}
where $\tilde S^{1:\tau}$ and $S^{1:\tau}$ are vectors flattened out from $\tilde S^t$ and $S^t$ for $1\leq t \leq \tau$ with the same order as in $\tilde M^{1:\tau}$, and 
\begin{equation}
    \label{eq: sigmaS}
    \Sigma^{1:\tau}((t_1, k_1, \ell_1), (t_2, k_2, \ell_2)) = \frac{1}{\mu(k_1)\mu(k_2)}\Sigma_{\subsig}^{1:\tau}((t_1, k_1, \ell_1), (t_2, k_2, \ell_2)).
\end{equation}

According to \eqref{eq:defSigM} and \eqref{eq:multinomial}, 
\begin{equation}
    \label{eq:defSig}
    \begin{split}
        &\Sigma^{1:\tau}((t_1, k_1, \ell_1),(t_2, k_2, \ell_2)) = 0 \quad \text{for } k_1 \neq k_2,\\
        & \Sigma^{1:\tau}((t_1, k, \ell_1),(t_2, k, \ell_2)) \\
        =&\frac{1}{\mu(k)}\E_k\left[\left(\mathbf{1}_{\{\ell_1\}}(X_{t_1\wedge T}) - S^{t_1}(k_1, \ell_1) \right)\left(\mathbf{1}_{\{\ell_2\}}(X_{t_2\wedge T}) - S^{t_2}(k_2, \ell_2) \right)\right].
    \end{split}
\end{equation}


    To transform the central limit theorem \eqref{eq:cltS} to a central limit theorem of $\tilde u(i)$, we point out that $u(i)$ is a differentiable function of entries of $S^t$ and now we are going to calculate these derivatives. Entries in $u$ outside of $D$ are fixed under any perturbation in $S^t$.  
    To write down an equation satisfied by all entries of $u$, 
    we rearrange \eqref{eq:FK} into
\begin{equation}\label{eq:FKrearrange}
(I - S_D^\tau) u = \sum_{t=0}^{\tau - 1} S^t R_D + (S^\tau - S_D^\tau) R_{D^\upc},
\end{equation}
where $R_{D^\upc}(i) = \mathbf{1}_{i \in D^\upc} R(i)$ for $i \in [n]$. $R_{D^\upc}$ is a vector whose nonzero entries are those fixed boundary values of $u$. Because $S^t$ is a stochastic matrix, we consider it to be determined by off diagonal entries as 
\begin{equation}\label{eq:Stdecomp}
    S^t = I + \sum_{k \neq \ell, k \in D} S^t(k, \ell) (e_k e_\ell^\mathsf{T} - e_k e_k^\mathsf{T}).
\end{equation}
This implies that 
\begin{equation}\label{eq:SDtdecomp}
    S_D^t = I + \sum_{k\in D, \ell \in D, k \neq \ell} S^t(k,\ell) (e_k e_\ell^\mathsf{T} - e_k e_k^\mathsf{T}) - \sum_{k \in D, \ell \in D^\upc} S^t(k, \ell) e_k e_k^\mathsf{T} - \sum_{k \in D^\upc} e_k e_k^\mathsf{T}.
\end{equation}
    For a specific pair $(k, \ell)$ with $k \in D$ and $\ell \neq k$, we define $S^t (\varepsilon) \coloneqq S^t + \varepsilon (e_k e_\ell^\mathsf{T} - e_k e_k^\mathsf{T})$. Then  we define 
    \[
    \frac{\partial u}{\partial S^t(k, \ell)}(S^t) \coloneqq \left[ \frac{\partial u_1}{\partial S^t(k, \ell)} (S^t), \dots, \frac{\partial u_n}{\partial S^t(k,\ell)} (S^t)\right]^\mathsf{T} = \frac{d}{d\varepsilon}u\left(S^t(\varepsilon)\right)\Bigg\rvert_{\varepsilon=0}.
    \]
    Differentiating \eqref{eq:FKrearrange} with respect to $\varepsilon$ and taking care of \eqref{eq:SDtdecomp}, we find the derivative with respect to $S^t(k, \ell)$ for different $k, \ell, t$. The formulas are different for different $t$. For $t = \tau$, $k, \ell \in D$, we have
    \[
    (I - S_D^\tau) \frac{\partial u}{\partial S^\tau (k,\ell)} - (e_k e_\ell^\mathsf{T} - e_k e_k^\mathsf{T}) u = 0.
    \]
    Therefore,
    \begin{equation}\label{eq:dStau}
    \frac{\partial u}    {\partial S^\tau (k, \ell)}     = (I - S_D^\tau)^{-1}e_k \left(u(\ell) - u (k)\right).
    \end{equation}
    Instead, for $k \in D, \ell \in D^\upc$, taking derivatives leads to
    \[
    (I - S_D^\tau)\frac{\partial u}{\partial S^\tau(k,\ell)} - e_k e_k^\mathsf{T} u = e_k e_\ell^\mathsf{T} R_{D^\upc},
    \]
    which gives the same formula for $\frac{\partial u}{\partial S^\tau(k,\ell)}$ as in \eqref{eq:dStau} after rearranging using the fact that $R_{D^\upc}(\ell) = u(\ell)$ for $\ell \in D^\upc$.

    For $t < \tau$, $k \in D$ and $\ell \in [n]$, 
    taking derivatives in \eqref{eq:FKrearrange} gives
    \begin{equation}
        \label{eq:dSt}
        \frac{\partial u}{\partial S^t(k, \ell)} = (I - S_D^\tau)^{-1} e_k \left(R_D(\ell) - R_D(k)\right).
    \end{equation}

    Now, we can use the delta method to transform \eqref{eq:cltS} to a central limit theorem for $\tilde u(i)$ for any $i \in D$.
    \begin{equation}
        \sqrt{M} \left( \tilde u(i) - u(i) \right) \xrightarrow{\mathcal{D}} \mathcal{N}(0, \sigma_i^2),
    \end{equation}
    with
    \begin{equation}\label{eq:sigmai}
        \begin{split}
        \sigma_i^2   = & \sum_{\substack{k_1, k_2 \in D,\\ \ell_1, \ell_2 \in [n], \\ k_1 \neq \ell_1, k_2 \neq \ell_2}} \sum_{t_1, t_2 =1}^{\tau} \frac{\partial u(i)}{\partial S^{t_1}(k_1,\ell_1)} \frac{\partial u(i)}{\partial S^{t_2}(k_2, \ell_2)} \Sigma^{1:\tau}((t_1, k_1, \ell_1), (t_2, k_2, \ell_2))\\
        = & \sum_{k\in D}\frac{1}{\mu(k)}\left(e_i^\mathsf{T}(I-S_D^\tau)^{-1}e_k\right)^2\E_k\left[\left(\sum_{\substack{\ell \in [n]\\ \ell \neq k}} (R_D(\ell) - R_D(k)) \sum_{t=1}^{\tau-1}(\mathbf{1}_{\ell}(X_{t\wedge T})-S^t(k,\ell)) \right.\right.\\
        & + (u(\ell) - u(k)) (\mathbf{1}_{\ell}(X_{\tau \wedge T})- S^\tau(k, \ell))\Bigg)^2\Bigg]\\
        = & \sum_{k\in D}\frac{1}{\mu(k)}\left(e_i^\mathsf{T}(I-S_D^\tau)^{-1}e_k\right)^2\E_k \left[\left( \sum_{\ell\in [n]} R_D(\ell) \sum_{t=1}^{\tau-1} (\mathbf{1}_\ell(X_{t\wedge T}) - S^t(k, \ell))  \right.\right.\\
        & + u(\ell) (\mathbf{1}_{\ell}(X_{\tau \wedge T}) - S^\tau(k, \ell))\Bigg)^2 \Bigg]\\
        = & \sum_{k \in D}\frac{1}{\mu(k)}\left(e_i^\mathsf{T}(I-S_D^\tau)^{-1}e_k\right)^2 \E_k\left[\left( \sum_{t=1}^{\tau-1}R_D(X_{t\wedge T}) + u(X_{\tau \wedge T}) - (u(k) - R_D(k))\right)^2 \right]\\
        = & \sum_{k \in D}\frac{1}{\mu(k)}\left(e_i^\mathsf{T}(I-S_D^\tau)^{-1}e_k\right)^2\mathbf{E}_k\left[\left(\sum_{t=0}^{\tau-1}R_D(X_{t \wedge T}) + u(X_{\tau \wedge T}) - u(k) \right)^2\right].
        \end{split}
    \end{equation}
    The second equality is obtained by plugging in \eqref{eq:dSt}, \eqref{eq:dStau} and \eqref{eq:defSig}. The third equality uses the fact that
    \[
    \sum_{\ell \in [n]}\mathbf{1}_\ell(X_{t\wedge T}) = \sum_{\ell \in [n]} S^t(k, \ell)  = 1,  \text{ for any } k \in D, t \leq \tau.
    \]
    The fourth equality uses the Bellman equation \eqref{eq:FK}. \qedhere
\end{proof}

Now we introduce the key lemma for Theorem \ref{thm:clt}. The lemma characterizes the continuity of the operator $\mathbf{E}_\ell\left[\sum_{t=0}^{T} \mathbf{1}_{\{j\}}(X_t)\right]$ in terms of the initial state $\ell$. Note that a dot product of this operator with the corresponding reward $R(j)$ generates the value function $u(\ell)$. 

We define additional hitting times counting from $t=0$. That is, $\hat T_k = \min\{t \geq 0: X_t = k\}$. This differs from $T_k$ because $\hat T_k$ includes $t=0$. Similarly, we define $\hat T^\uptau_k$ and more generally $\hat T_A$ and $\hat T_A^\uptau$ for any subset $A \subseteq [n]$ as the counterparts of $T_k^\uptau$, $T_A$, and $T_A^\uptau$ but counted from $t=0$. But notice that we have defined $T$ and $T^\uptau$ to be counted from $t=0$. 
In all hitting time definitions, when the set over which the minimum is taken is empty, the hitting time is defined to be $+\infty$.



\begin{lem}
    \label{lem:generallem}
     For $i, k \in D$, $\ell, j \in [n]$, $k \neq \ell$,
\begin{equation}
    \label{eq:generallem}
\mathbf{E}_i\left[\sum_{t=0}^{T^\uptau-1} \mathbf{1}_{\{k\}}(Y_t^\uptau)\right] \left\lvert \mathbf{E}_\ell\left[\sum_{t=0}^{T} \mathbf{1}_{\{j\}}(X_t)\right]-\mathbf{E}_k\left[\sum_{t=0}^{T} \mathbf{1}_{\{j\}}(X_t)\right]\right\rvert \leq
    \frac{\mathbf{E}_i\left[\sum_{t=0}^{T} \mathbf{1}_{\{j\}}(X_t)\right]}{Q^\tau(k, \ell)}. 
\end{equation}
\end{lem}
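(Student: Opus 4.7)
The plan is to use the strong Markov property for the $\tau$-subsampled chain $Y^\tau$, applied at two different events, and then combine the two lower bounds on $\mathbf{E}_i[\sum_{t=0}^T \mathbf{1}_{\{j\}}(X_t)]=:\phi(i)$ that they produce. Let me denote the Green's function $G^\tau(i,k):=\mathbf{E}_i[\sum_{t=0}^{T^\tau-1}\mathbf{1}_{\{k\}}(Y_t^\tau)]$ (the LHS factor).

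The first strong-Markov application is at the first visit of $Y^\tau$ to $k$ starting from $i$. It gives
$$
\phi(i)\ \geq\ \mathbf{P}_i[\hat T_k^\tau \leq T^\tau-1]\,\phi(k)\ =\ \frac{G^\tau(i,k)}{G^\tau(k,k)}\,\phi(k),
$$
using the standard identity $G^\tau(i,k)=\mathbf{P}_i[\hat T_k^\tau\leq T^\tau-1]\,G^\tau(k,k)$. The second application is at each of the visits of $Y^\tau$ to $k$: by the definition of $Q^\tau(k,\ell)$, each such visit is followed—independently, by the strong Markov property—by a ``clean trip'' to $\ell$ (reaching $\ell$ before returning to $k$ or escaping elsewhere) with probability exactly $Q^\tau(k,\ell)$. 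Because the ``phase'' immediately following each clean trip (from the hit of $\ell$ until the next visit to $k$ by $Y^\tau$, or until escape) occupies a disjoint time-interval in $X$-time from every other such phase, summing the expected visits to $j$ in these phases yields
$$
\phi(i)\ \geq\ G^\tau(i,k)\,Q^\tau(k,\ell)\,W_\ell,\qquad W_\ell\ =\ \phi(\ell)-\mathbf{P}_\ell[\hat T_k^\tau\leq T^\tau-1]\,\phi(k).
$$

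The lemma now follows by a case split on the sign of $\phi(\ell)-\phi(k)$. When $\phi(\ell)\geq\phi(k)$, the factor $\mathbf{P}_\ell[\hat T_k^\tau\leq T^\tau-1]\leq 1$ immediately gives $W_\ell\geq \phi(\ell)-\phi(k)=|\phi(\ell)-\phi(k)|$, and the second inequality above is precisely the desired bound. When $\phi(\ell)<\phi(k)$, I would use the first inequality together with the algebraic claim
$$
G^\tau(k,k)\,Q^\tau(k,\ell)\,(\phi(k)-\phi(\ell))\ \leq\ \phi(k),
$$
which reduces the required estimate to the first lower bound on $\phi(i)$. This claim is trivial when $G^\tau(k,k)\,Q^\tau(k,\ell)\leq 1$, and in the complementary range one applies strong Markov starting from $\ell$ at the first hit of $\{k,\ell\}\cup D^{\mathrm c}$: since $\phi(\ell)\geq \mathbf{P}_\ell[T_k^\tau\leq T^\tau]\,\phi(k)$ and, by telescoping via returns to $\ell$, $\mathbf{P}_\ell[T_k^\tau\leq T^\tau]=1-s$ where $s:=\mathbf{P}_\ell[T^\tau<T_k^\tau]$, the claim reduces to the identity $G^\tau(k,k)^{-1}=\mathbf{P}_k[T^\tau<T_k^\tau]=q'+Q^\tau(k,\ell)\,s$ obtained by conditioning on the outcome of the first hit of $\{k,\ell\}\cup D^{\mathrm c}$ from $k$ (here $q'=\mathbf{P}_k[T^\tau_{D^{\mathrm c}\setminus\{\ell\}}<T^\tau_\ell\wedge T^\tau_k]$).

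I expect the main obstacle to be the algebra in this reverse case: one must carefully track how $G^\tau(k,k)$, $Q^\tau(k,\ell)$, the one-step return probability $R^\tau(k,k)$, and the auxiliary escape probability $s$ interact, and handle the sub-case $\ell\in D^{\mathrm c}$ (where $s=1$ is degenerate) alongside the generic $\ell\in D$ case. The other steps are essentially conceptual applications of the strong Markov property plus the disjointness of the ``phases'', which is the key geometric fact that powers the bound.
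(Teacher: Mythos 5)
Your proposal is correct in its essentials, and it takes a genuinely different route from the paper. The paper's proof converts every occupation time into a ratio of hitting probabilities (e.g.\ $\mathbf{E}_\ell[\sum_{t=0}^{T}\mathbf{1}_{\{j\}}(X_t)]=\mathbf{P}_\ell[\hat T_j\le T]/\mathbf{P}_j[T\le T_j]$), introduces an auxiliary chain absorbed at $j$ to translate these into $\tau$-subsampled hitting probabilities, lumps the state space into the four states $\{\ell,k,j,\Delta\}$, solves the resulting $2\times 2$ linear system to get an \emph{exact} formula for $\mathbf{P}_\ell[T_j<T_\Delta]-\mathbf{P}_k[T_j<T_\Delta]$, and then bounds the two terms of that formula separately, with several exceptional cases ($j=k$, $j=\ell$, $\ell\in D^{\mathrm c}$) handled by hand. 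You instead never compute the difference $\phi(\ell)-\phi(k)$: you bound it from both sides using two strong-Markov lower bounds on $\phi(i)$ — the first-passage decomposition $\phi(i)\ge \mathbf{P}_i[\hat T_k^\tau<T^\tau]\,\phi(k)$, and the excursion ("phase") decomposition $\phi(i)\ge G^\tau(i,k)\,Q^\tau(k,\ell)\,W_\ell$ — together with a sign split and the renewal identity $G^\tau(k,k)^{-1}=\mathbf{P}_k[T^\tau<T_k^\tau]=q'+Q^\tau(k,\ell)\,s$. I checked the pieces you left implicit: the disjointness of the phases does hold (phase $m$ sits strictly between consecutive $Y^\tau$-visits to $k$), $W_\ell\ge 0$ is automatic since it is an expected count over a sub-interval, the renewal identity follows by conditioning on the first hit of $\{k,\ell\}\cup D^{\mathrm c}$, and the degenerate case $\ell\in D^{\mathrm c}$ (where $s=1$ and $\phi(\ell)=\mathbf{1}_{\{j\}}(\ell)$) goes through; in fact your reverse case needs no sub-case split at all, since $\phi(k)-\phi(\ell)\le s\,\phi(k)$ and $G^\tau(k,k)\,Q^\tau(k,\ell)\,s\le 1$ already give the algebraic claim. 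What your route buys is a shorter, more unified argument that avoids the $j$-absorbed chain, the lumped linear system, and most of the exceptional-case analysis; what the paper's route buys is an explicit closed form for the difference of hitting probabilities, at the cost of more machinery. The remaining work in your version is the careful bookkeeping of the excursion decomposition (half-open intervals, the cases $j=k$ and $j=\ell$), which is routine but should be written out.
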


\begin{proof}[Proof of Lemma \ref{lem:generallem}]
The proof will proceed by representing all quantities by probabilities, and then with the help of a complementary Markov chain, we can bound the difference of involved probabilities. We start with two basic identities that hold for any $i, k \in D$,
\begin{equation}
    \label{eq:Ytcountingvsprobs}
    \E_i\left[\sum_{t=0}^{T^\uptau - 1} \mathbf{1}_{\{k\}} \left(Y_t^\uptau\right)\right] = \frac{\prob_i\left[ \hat T_k^\uptau < T^\uptau \right]}{\prob_k \left[T^\uptau <  T^\uptau_k\right]},
\end{equation}
and 
\begin{equation}
    \label{eq:Xtcountingvsprobs}
    \E_\ell\left[\sum_{t=0}^{T } \mathbf{1}_{\{j\}} \left(X_t\right)\right] = \frac{\prob_\ell\left[ \hat T_j \leq T \right]}{\prob_j \left[T \leq  T_j\right]}.
\end{equation}
Note that when $j \in D^\upc$, $\probb[j]{T\leq T_j} = 1$.
With \eqref{eq:Ytcountingvsprobs} and \eqref{eq:Xtcountingvsprobs}, the left hand side of \eqref{eq:generallem} becomes
\begin{equation}\label{eq:decomp1}
    \frac{\mathbf{P}_i\left[ \hat T_k^\uptau < T^\uptau\right]}{\mathbf{P}_k\left[ T^\uptau < T_k^\uptau\right]}
    \left\lvert \frac{\mathbf{P}_\ell\left[ \hat T_j \leq T\right]-\mathbf{P}_k\left[ \hat T_j \leq T\right]}{\mathbf{P}_j\left[ T \leq T_j\right]} \right\rvert.
    \end{equation}
To make probabilities related to $X_t$ in \eqref{eq:decomp1} become probabilities related to $\tau$-step transitions, we consider another Markov chain $Y_t^{\uptauj}$ that follows the transition probability matrix $S_j^\tau \coloneqq (S_j)^\tau$, where $S_j$ is defined as
\begin{equation}
    \label{eq:Sj}
    S_j(k, \ell) = \begin{cases}
        S(k, \ell) & k \neq j, k, \ell \in [n]\\
        \delta_{k\ell} & k = j, \ell \in [n].
    \end{cases}
\end{equation}
$S_j$ is the same as $S$ except it is stopped once $j$ is hit. Note that when $j\in D^\upc$, $S_j = S$. Define hitting times $T_k^\uptauj$, $T^\uptauj$ and $\hat T_k^\uptauj$  of $Y_t^\uptauj$ similarly as those of $Y_t^\uptau$. If $Y_t^\uptauj$ hits $j$ before $k$, then $T_k^\uptauj = +\infty$, and this exception is similarly defined for other hitting times. Note that $T^{\tau, j}$ counts from $t=0$ as does $T$. One key relation between $Y_t^\uptauj$ and $X_{t\wedge T}$ is that
\begin{equation}
    \label{eq:probeqtau}
    \prob_\ell\left[\hat T_j \leq T\right] = \prob_\ell \left[\hat T_j^\uptauj \leq T^\uptauj \right].
\end{equation}
This relation is guaranteed by the fact that $Y_t^\uptauj$ will stay at $j$ if a coupled $X_{t\wedge T}$ hits $j$ at any time within each length-$\tau$ time interval.

Now consider the case where $\ell \in D$, $j \neq k$ and $j \neq \ell$, and identify all of $D^\textrm{c}\setminus\{j\}$ with a single additional index $\Delta$. Note that whether $j\in D$ or $j \in D^\upc$, 
\begin{equation}
    \label{eq:TorTdelta}
\prob_\ell \left[\hat T_j^\uptauj \leq T^\uptauj \right] = \probb[\ell]{\hat T_j^\uptauj < T_\Delta^\uptauj }.
\end{equation}
Consider another Markov chain $Z_t \in \{\ell, k, j, \Delta\}$ that records transitions only when $Y_t^\uptauj$ transitions between one of these states. We will use $Q$ to denote the transition probabilities of $Z_t$. For example,
\[ 
Q(\ell,\ell) = \mathbf{P}_\ell\left[ Z_1 = \ell \right] = \mathbf{P}_\ell\left[ T_\ell^\uptauj < \min\{T_k^\uptauj, T_j^\uptauj,T_\Delta^\uptauj\}\right] 
\]
and
\[
Q(\ell, k) = \mathbf{P}_\ell\left[ Z_1 = k \right] = \mathbf{P}_\ell\left[ T_k^\uptauj < \min\{T_\ell^\uptauj,T_j^\uptauj,T_\Delta^\uptauj\}\right].
\]
In fact, $\probb[\ell]{ T_j^\uptauj < T_\Delta^\uptauj}$ and $\probb[k]{T_j^\uptauj < T_\Delta^\uptauj}$ satisfy the linear system
\begin{equation}
\label{eq:systemoftwohittingtimes}
\begin{split}
\mathbf{P}_\ell\left[T_j^\uptauj < T_\Delta^{\uptauj}\right] &= Q(\ell,\ell) \mathbf{P}_\ell\left[T_j^\uptauj< T_\Delta^\uptauj\right] + Q(\ell,k) \probb[k]{T_j^\uptauj < T_\Delta^\uptauj} + Q(\ell, j),\\
\probb[k]{T_j^\uptauj < T_\Delta^\uptauj } &= Q(k,k) \probb[k]{T_j^\uptauj < T_\Delta^\uptauj} + Q(k, \ell) \probb[\ell]{T_j^\uptauj < T_\Delta^\uptauj} + Q(k,j).
\end{split}
\end{equation}
It is convenient to work with the normalized transition probabilities
\[
\overline Q(\ell, \cdot) = \frac{Q(\ell, \cdot)}{1-Q(\ell,\ell)}\quad\text{and}\quad
\overline Q(k, \cdot) = \frac{Q(k, \cdot)}{1-Q(k,k)}.
\]
With this notation,
solving \eqref{eq:systemoftwohittingtimes} reveals the identities
\[
\mathbf{P}_\ell\left[ T_j^{\uptauj} < T_\Delta^\uptauj\right] = \frac{ \overline{Q}(\ell,k)\overline{Q}(k,j) + \overline{Q}(\ell,j) }{1 - \overline{Q}(\ell,k)\overline{Q}(k,\ell) }
\]
and
\[
\mathbf{P}_k\left[ T_j^\uptauj  < T_\Delta^\uptauj\right] = \frac{ \overline{Q}(k,\ell)\overline{Q}(\ell,j) + \overline{Q}(k,j) }{1 - \overline{Q}(\ell,k)\overline{Q}(k,\ell) },
\]
from which, after a little more algebra, we find that
\begin{equation}\label{eq:diffP}
\mathbf{P}_\ell\left[ T_j^\uptauj < T_\Delta^\uptauj\right] - \mathbf{P}_k\left[ T_j^\uptauj < T_\Delta^\uptauj\right]
= \frac{ {-}\overline{Q}(\ell,\Delta)\overline{Q}(k,j) {+}\overline{Q}(k,\Delta)  \overline{Q}(\ell,j) }{1 - \overline{Q}(\ell,k)\overline{Q}(k,\ell) }.
\end{equation}

This is the key quantity that characterizes the difference between probabilities with initial states $k$ and $\ell$. By bounding this, we can finally get relative variance bounds. We observe immediately that
\[
\frac{ \overline{Q}(\ell,\Delta)\overline{Q}(k,j) }{1 - \overline{Q}(\ell,k)\overline{Q}(k,\ell) }
\leq \mathbf{P}_k\left[ T_j^\uptauj < T_\Delta^\uptauj\right]\, \overline{Q}(\ell,\Delta).
\]
Now we relate these probabilities to $Y_t^\uptau$. We have that  
$\mathbf{P}_k\left[ T_\ell^\uptau <  T_k^\uptau \wedge T^\uptau\right] \leq  Q(k,\ell) + Q(k,j)$,
so
\[
\frac{ {Q}(k,\ell)\overline{Q}(\ell,j) + {Q}(k,j)}{\mathbf{P}_k\left[ T_\ell^\uptau < T_k^\uptau \wedge T^{\uptau}\right]  }\geq \frac{ \overline{Q}(k,\ell)\overline{Q}(\ell,j) + \overline{Q}(k,j)}{ \overline{Q}(k,\ell) + \overline{Q}(k,j)}
\geq \overline{Q}(\ell,j).
\]
This yields
\begin{equation*}
\frac{  \overline{Q}(k,\Delta)  \overline{Q}(\ell,j) }{1 - \overline{Q}(\ell,k)\overline{Q}(k,\ell) } 
\leq \frac{\mathbf{P}_k\left[ T_j^\uptauj < T_\Delta^\uptauj\right]\, {Q}(k,\Delta)}{\mathbf{P}_k\left[T_\ell^\uptau  <  T_k^\uptau \wedge T^\uptau\right]}.
\end{equation*}
Bringing these inequalities back into  \eqref{eq:diffP}, with \eqref{eq:probeqtau} and \eqref{eq:TorTdelta} we have
\begin{equation*}
\left\lvert
\mathbf{P}_\ell\left[ \hat T_j \leq T\right] - \mathbf{P}_k\left[ \hat T_j \leq T\right]
\right\rvert
\leq  \mathbf{P}_k\left[ T_j^\uptauj < T_\Delta^\uptauj\right]\, \max\left\{ \overline{Q}(\ell,\Delta),
\frac{ {Q}(k,\Delta)}{\mathbf{P}_k\left[T_\ell^\uptau < T_k^\uptau\wedge T^\uptau\right]}\right\}.
\end{equation*}
Now notice that $
\mathbf{P}_k\left[ T^\uptau <  T_k^\uptau\right] 
\geq Q(k,\Delta) + \mathbf{P}_k\left[T_\ell^\uptau < T_k^\uptau \wedge T^\uptau \right]\overline{Q}(\ell,\Delta)$,
and therefore,
\begin{equation}\label{eq:lastboundlem}
\frac{\left\lvert\mathbf{P}_\ell\left[ \hat T_j^\uptauj \leq T^\uptauj\right] - \mathbf{P}_k\left[ \hat T_j^\uptauj \leq T^\uptauj\right]\right\rvert}
{\mathbf{P}_k\left[ T^\uptau <  T_k^\uptau\right] }
\leq  \frac{\mathbf{P}_k\left[ T_j^\uptauj \leq T^\uptauj\right]}{\mathbf{P}_k\left[T_\ell^\uptau <  T_k^\uptau\wedge T^\uptau\right]}.
\end{equation}

Plugging \eqref{eq:lastboundlem} back into \eqref{eq:decomp1} we find that
\begin{multline}\label{eq:followingsteps}
     \mathbf{E}_i\left[\sum_{t=0}^{T^\uptau-1} \mathbf{1}_{\{k\}}(Y_t^\uptau)\right]  \left\lvert \mathbf{E}_\ell\left[\sum_{t=0}^{T} \mathbf{1}_{\{j\}}(X_t)\right]-\mathbf{E}_k\left[\sum_{t=0}^{T} \mathbf{1}_{\{j\}}(X_t)\right]\right\rvert\\ 
    \leq   
    \frac{\mathbf{P}_i\left[\hat T_k^\uptau < T^\uptau\right] \mathbf{P}_k\left[ T_j^\uptauj \leq T^\uptauj\right]}{\mathbf{P}_j\left[ T \leq T_j\right] \mathbf{P}_k\left[T_\ell^\uptau < T_k^\uptau\wedge T^\uptau\right]} \leq \frac{\mathbf{P}_i\left[\hat T_j \leq T\right]}{\mathbf{P}_j\left[ T \leq T_j\right] \mathbf{P}_k\left[T_\ell^\uptau <  T_k^\uptau\wedge T^\uptau\right]}.
    \end{multline}
    By \eqref{eq:Xtcountingvsprobs} and the definition of $Q^\tau(k,\ell)$ in Theorem \ref{thm:clt}, this is the stated bound in the case where $\ell \in D, j\neq k$ and $j\neq \ell$.

Now we deal with the exceptional cases. When $\ell \in D$ but $j=k$ or $j=\ell$, the same conclusion holds with minor modifications of the intermediate steps. Specifically, take $j=\ell$ (so that $j \in D$ and $\Delta = D^\upc$) as an example. Then $Z_t \in \{\ell, k, \Delta\}$ records transitions of $Y_t^\uptauj$ among these three states. With $Q$ denoting its transition probabilities.
\[
\probb[\ell]{\hat T_j^\uptauj \leq T^\uptauj} - \probb[k]{\hat T_j^\uptauj \leq T^\uptauj} = 1 - \overline{Q}(k,j) = 1 - \overline{Q}(k, \ell) = \overline Q(k, \Delta),
\]
and $
\probb[k]{T_\ell^\uptau <  T_k^\uptau \wedge T^\uptau} = Q(k, \ell)$, $\probb[k]{ T_j < T}= \overline Q(k,\ell)$.
Therefore, 
\[
\left \lvert\probb[\ell]{\hat T_j \leq T} - \probb[k]{\hat T_j \leq T} \right \rvert \leq \frac{\probb[k]{T_j < T} Q(k, \Delta)}{\probb[k]{T_\ell^\uptau < T_k^\uptau \wedge T^\uptau}}.
\]
We also note that $\probb[k]{T^\uptau < T_k^\uptau} \geq Q(k,\Delta)$.
Then the following steps are the same as \eqref{eq:lastboundlem} and \eqref{eq:followingsteps} in the general case.

When $\ell \in D^\upc$, $j\in [n]$ and $\ell \in \Delta$, $\probb[\ell]{\hat
T_j \leq T} = 0$.
\[
\begin{split}
\frac{\mathbf{P}_i\left[\hat T_k^\uptau < T^\uptau\right]}{\mathbf{P}_k\left[T^\uptau <  T_k^\uptau\right]}\frac{\left\lvert\mathbf{P}_\ell\left[\hat T_j \leq T\right] - \mathbf{P}_k\left[\hat T_j \leq T\right]\right\rvert}{\probb[j]{T\leq T_j}} & = \frac{\mathbf{P}_i\left[\hat T_k^\uptau < T^\uptau\right]\mathbf{P}_k\left[\hat T_j \leq T\right]}{\mathbf{P}_k\left[T^\uptau <  T_k^\uptau\right]\probb[j]{T\leq T_j}}\\
&\leq \frac{\mathbf{P}_i\left[\hat T_j \leq T\right]}{\probb[j]{T\leq T_j}\mathbf{P}_k\left[T_\ell^\uptau <  T_k^\uptau \wedge T^\uptau_{D^\upc \setminus {\{\ell\}}}\right]}.
\end{split}
\]
Applying \eqref{eq:Xtcountingvsprobs} to the right hand side, this gives the bound \eqref{eq:generallem}.

When $\ell \in D^\upc$ and $j = \ell$, $\mathbf{P}_\ell\left[\hat T_j \leq T\right] = 1$. We again use $Q$ to record transitions of $Y_t^\uptauj (= Y_t^\uptau)$ among $\{k, j, \Delta\}$. Then,
\[
\begin{split}
\frac{\mathbf{P}_i\left[\hat T_k^\uptau < T^\uptau\right]}{\mathbf{P}_k\left[T^\uptau <  T_k^
        \uptau\right]}\frac{\left\lvert\mathbf{P}_\ell\left[\hat T_j \leq T\right] - \mathbf{P}_k\left[\hat T_j \leq T\right] \right\rvert}{\probb[j]{T \leq T_j}} & \leq \frac{\mathbf{P}_i\left[\hat T_k^\uptau < T^\uptau \right]}{\probb[j]{T\leq T_j}\mathbf{P}_k\left[T^\uptau <  T_k^\uptau\right]} \\
        &\leq \frac{\mathbf{P}_i\left[\hat T_j \leq T\right]}{\probb[j]{T\leq T_j}\mathbf{P}_k\left[T_\ell^\uptau <  T_k^\uptau \wedge T_{D^\upc \setminus \{\ell\}}^\uptau\right]}.
\end{split}
\]
The last inequality follows from the facts that 
\[
\frac{1}{\probb[k]{T^\uptau < T_k^\uptau}} = \frac{1}{1 - Q(k,k)} = \frac{\overline Q(k,j)}{Q(k, j)} = \frac{\probb[k]{T_j^\uptau \leq T^\uptau }}{\probb[k]{T_\ell^\uptau <  T_k^\uptau \wedge T_{D^\upc \setminus\{\ell\}}^\uptau}}.
\]
We thus prove that the bound \eqref{eq:generallem} holds for any indices $i, k, j \in D$, $\ell \in [n]$ and $k\neq \ell$.
\end{proof}

Now we are well prepared to prove \eqref{eq:avar1} in Theorem\,\ref{thm:clt}.
\begin{proof}[Proof of \eqref{eq:avar1} in Theorem \ref{thm:clt}]
We will interpret two main terms in \eqref{eq:sigmaimain} probabilistically. First, for $i, k \in D$,
\begin{equation}
    \label{eq:hittingtimes}
    e_i^\mathsf{T}(I - S_D^\tau)^{-1}e_k = \sum_{t=0}^{\infty} \left(S_D^\tau\right)^t(i, k) = \E_i \left[ \sum_{t=0}^{T^\uptau - 1} \mathbf{1}_{\{k\}}(Y_t^\uptau)\right].
\end{equation}
As for the expectation following $e_i^\mathsf{T}(I - S_D^\tau)^{-1}e_k$ in \eqref{eq:sigmaimain}, we note from the Bellman equation \eqref{eq:FK} that 
\[
\E_k\left[\sum_{t=0}^{\tau-1}R_D(X_{t \wedge T}) + u(X_{\tau \wedge T}) - u(k) \right]^2  = \var_k \left[ \sum_{t=1}^{\tau-1}R_D(X_{t \wedge T}) + u(X_{\tau \wedge T})\right],
\]
where the subscript $k$ indicates conditioning on $X_0 = k$.
    Since this is a variance, we can decompose it into conditional variance of each step by the law of total variance, as 
    \begin{equation}
    \label{eq:lawoftotalvar}
    \begin{split}
    \var_k\left[ \sum_{t=1}^{\tau-1} R_D(X_{t\wedge T}) + u(T_{\tau \wedge T})\right] = & \E_k \left[ \var\left( \sum_{t=1}^{\tau - 1}R_D(X_{t\wedge T}) + u(X_{\tau \wedge T})\, \Big |\, X_0 = k, X_{1 \wedge T} \right)\right]\\
       + & \var_k \left[\E\left( \sum_{t=1}^{\tau - 1}R_D(X_{t \wedge T}) + u(X_{\tau \wedge T}) \, \Big | \, X_0 = k, X_{1\wedge T} \right) \right].
    \end{split}
    \end{equation}
    By the definition of $u$, $\E\left( \sum_{t=1}^{\tau - 1}R_D(X_{t \wedge T}) + u(X_{\tau \wedge T}) \, \Big | \, X_0 = k, X_{1\wedge T} \right)  = u(X_{1\wedge T})$, which helps to simplify the second term. Meanwhile, the first term can be simplified as
    \[
    \begin{split}
    &\E_k \left[ \var\left( \sum_{t=1}^{\tau - 1}R_D(X_{t\wedge T}) + u(X_{\tau \wedge T})\, \Big |\, X_0 = k, X_{1 \wedge T} \right)\right] \\
    = & \E_k \left[ \var\left( \sum_{t=2}^{\tau - 1}R_D(X_{t\wedge T}) + u(X_{\tau \wedge T})\, \Big |\, X_0 = k, X_{1 \wedge T} \right)\right].
    \end{split}
    \]
    We can further condition it on $X_{2\wedge T}$ and perform a similar decomposition as in \eqref{eq:lawoftotalvar}, which can be done inductively for $X_{t\wedge T}$ up to $t = \tau$. As  a result of this inductive conditioning, we obtain
   \begin{equation}\label{eq:sumSappend}
   \begin{split}
       &\E_k\left[\sum_{t=0}^{\tau-1}R_D(X_{t \wedge T}) + u(X_{\tau \wedge T}) - u(k) \right]^2  \\
       =  &\mathbf{E}_k(\mathbf{Var}_{X_{(\tau - 1)\wedge T}}(u(X_{\tau\wedge T}))) + \dots + \mathbf{E}_k(\mathbf{Var}_{X_{2\wedge T}}(u(X_{3\wedge T}))) + \mathbf{E}_k(\mathbf{Var}_{X_{1\wedge T}}(u(X_{2\wedge T}))) \\
       &+ \mathbf{Var}_k(u(X_{1\wedge T}))\\
       \leq & \sum_{\substack{\ell \in [n] \\ \ell \neq k}} (u(\ell) - u(k))^2 \left[\sum_{t=1} ^{\tau} S^t(k, \ell)\right].
   \end{split}
   \end{equation}
   For the inequality, we use the property of variance that
   \[
   \E_k (\var_{X_{(t-1)\wedge T}}(u(X_{t\wedge T}))) \leq \E_k (\E_{X_{(t-1)\wedge T}} (u(X_{t\wedge T}) - u(k))^2) = \E_k (u(X_{t\wedge T}) - u(k))^2.
   \]

 Now, we further decompose the difference $u(\ell) - u(k)$ into components that we have already bounded in Lemma \ref{lem:generallem}, using the fact that $R$ is nonnegative throughout our paper.
 \begin{equation}
    \begin{split}
    \left|u(\ell) - u(k)\right| = & \left| \sum_{j \in [n]} \left[\mathbf{E}_\ell\left(\sum_{t=0}^{T}\mathbf{1}_{\{j\}}(X_t) R(j)\right) - \mathbf{E}_k\left(\sum_{t=0}^{T}\mathbf{1}_{\{j\}}(X_t) R(j)\right)\right]\right|
    \\
    \leq & \sum_{j \in [n]}\left|\mathbf{E}_\ell \left( \sum_{t=0}^{T}\mathbf{1}_j(X_t)\right) - \mathbf{E}_k\left(\sum_{t=0}^{T}\mathbf{1}_j(X_t)\right)\right|R(j).
    \end{split}
    \end{equation}
    Then according to Lemma \ref{lem:generallem}, we can bound the variation in $u$ between
states as
    \[
    \mathbf{E}_i\left[\sum_{t=0}^{T^\uptau-1}\mathbf{1}_{\{k\}}(Y_t^\uptau)\right]\left|u(\ell) - u(k)\right| \leq \sum_{j \in [n]} \frac{\mathbf{E}_i\left(\sum_{t=0}^{T}\mathbf{1}_{\{j\}}(X_t)\right) R(j)}{Q^\tau(k,\ell)} = \frac{u(i)}{Q^\tau(k, \ell)}.
    \]
    Combined with \eqref{eq:hittingtimes} and \eqref{eq:sumSappend}, this yields
    \[
    \sigma_i^2 \leq u^2(i) \sum_{k \in D}\frac{1}{\mu(k)} \frac{\sum_{t=1}^\tau S^t(k,\ell)}{Q^\tau(k, \ell)^2}. \qedhere
    \]

    
\end{proof}

   As for Corollary \ref{cor:avar1}, the strengthening comes from an intermediate step in the proof of \eqref{eq:avar1}.
\begin{proof}[Proof of Corollary \ref{cor:avar1}]
    If $R(i)=0$ for any $i \in D$ (for example, when $u$ is the committor function), then 
    \begin{equation}
        \E_k\left(\sum_{t=0}^{\tau-1}R_D(X_{t \wedge T}) + u(X_{\tau \wedge T}) - u(k) \right)^2  =  \sum_{\substack{\ell \in [n]\\\ell \neq k}} (u(\ell) - u(k))^2 S^\tau(k, \ell).
    \end{equation}
    Compared with \eqref{eq:sumSappend}, we see that the same following steps will replace $\sum_t S^t(k,\ell)$ in Theorem \ref{thm:clt} by $S^\tau$ for Corollary \ref{cor:avar1}.
\end{proof}

\end{document}